\pdfoutput=1
\documentclass{article}

\PassOptionsToPackage{numbers, compress}{natbib}

 \usepackage[preprint]{neurips_2026}
\usepackage[utf8]{inputenc} %
\usepackage[T1]{fontenc}    %
\usepackage{hyperref}       %
\usepackage{url}            %
\usepackage{booktabs}       %
\usepackage{amsfonts}       %
\usepackage{nicefrac}       %
\usepackage{microtype}      %
\usepackage{xcolor}         %
\usepackage{microtype}
\usepackage{graphicx}
\usepackage{subcaption}
\usepackage{booktabs} %
\usepackage{placeins}
\usepackage{xcolor}

\usepackage{hyperref}
\usepackage{float}
\usepackage{algorithm}
\usepackage{algorithmic}

\usepackage{amsmath}
\usepackage{amssymb}
\usepackage{mathtools}
\usepackage{amsthm}
\usepackage{graphicx}
\usepackage{multirow}
\usepackage[capitalize,noabbrev]{cleveref}
\usepackage{colortbl} 
\usepackage{xcolor}
\usepackage[most]{tcolorbox}
\usepackage{listings}
\usepackage[T1]{fontenc}
\usepackage{inconsolata}
\usepackage{multirow}
\usepackage{booktabs}
\usepackage{makecell}
\usepackage{booktabs}
\usepackage{wrapfig}

\definecolor{BoxFrame}{HTML}{2F3C7E}
\definecolor{BoxBack}{HTML}{F6F8FF}

\tcbset{
  mylisting/.style={
    enhanced,
    boxrule=1.5pt,
    colframe=BoxFrame,
    colback=BoxBack,
    left=8pt,right=8pt,top=8pt,bottom=8pt,
    listing only,
    listing engine=listings,
    breakable,
    before upper=\ttfamily,
  }
}
\theoremstyle{plain}
\newtheorem{theorem}{Theorem}
\newtheorem{proposition}[theorem]{Proposition}

\theoremstyle{definition}
\newtheorem{definition}[theorem]{Definition}

\theoremstyle{remark}

\usepackage[textsize=tiny]{todonotes}
\usepackage{booktabs}
\usepackage{array}
\usepackage{xcolor}
\usepackage{pifont}
\title{HuatuoGPT-3: RL-Only Domain Adaptation \\ from Base Models}

\author{%
  Junying Chen\thanks{Equal contribution. \quad $^\dagger$Corresponding author.}, Xinyuan Xie\footnotemark[1], Ziniu Li, Wenyuan Gu, Jianquan Li\\
  \textbf{Xiang Wan}, \textbf{Guangjun Yu},  \textbf{Ruoyu Sun}, \textbf{Haizhou Li}, \textbf{Benyou Wang}\footnotemark[2]
  \\  %
  The Chinese University of Hong Kong, Shenzhen \\
  Shenzhen Research Institute of Big Data\\
  Shenzhen Loop Area Institute \\
  National Health Data Institute, Shenzhen \\
  \texttt{wangbenyou@cuhk.edu.cn} \\
}

\begin{document}

\maketitle

\begin{abstract}
Domain adaptation aims to turn a general-purpose large language model (LLM) into an expert for a target domain. While the dominant SFT+RL pipeline offers a convenient cold start, it may reduce exploration diversity and introduces additional complexity through multi-stage optimization. These limitations motivate RL-only adaptation. However, pure on-policy RL suffers from a cold-start problem, while mixed-policy RL still falls short: informative tokens in teacher outputs are learned too slowly in early training, and stale teacher outputs can hinder later improvement. We identify these two failure modes as \emph{Gradient Starvation} and \emph{Teacher-Distribution Anchoring}. To address them, we propose \textbf{One-stage Policy Optimization (OnePO)}, which treats teacher outputs as transient guidance for policy improvement. OnePO combines \textit{Adaptive Objective Evolution} to strengthen learning on informative low-probability teacher tokens and \textit{Teacher Retirement} to discard teacher outputs once the current policy can surpass them. On medical adaptation, OnePO achieves 67.2 on HealthBench (Total) with only 20K training samples, outperforming SFT+RL and pure RL by 2.7 and 7.4 points, respectively. We further scale OnePO to produce \textbf{HuatuoGPT-3}, an open-source medical LLM series whose 27B variant reaches 70.1 on HealthBench (Total) and 71.4 on HealthBench Professional, surpassing frontier models such as GPT-6 Astra. Models and code are available at \url{https://github.com/FreedomIntelligence/HuatuoGPT-3}.
\end{abstract}
\newcommand{\cmark}{\textcolor{green!60!black}{\ding{51}}}
\newcommand{\xmark}{\textcolor{red!70!black}{\ding{55}}}

\newcolumntype{C}[1]{>{\centering\arraybackslash}m{#1}}

\section{Introduction}

Turning a general-purpose Large Language Model (LLM) into an expert for a target domain is a common training goal, which we refer to as domain adaptation. In practice, the most common route begins with supervised fine-tuning (SFT), which provides a convenient cold start~\cite{alpacare,investlm,german_law_adaptation,chemllm}. To further encourage more diverse outputs and improve generalization beyond imitation, reinforcement learning (RL) is often introduced afterward~\cite{huatuogpt,huatuogpt_ii,huatuogpt_o1,baichuan_m2,baichuan_m3,fino,lingshu}. However, prior work suggests that SFT may reduce output diversity and restrict the exploration space available to subsequent RL~\cite{openreasoner,simplerlzoo,CurioSFT,sft_diversity}. Moreover, this multi-stage pipeline introduces shifts in objectives and data distributions, which increase optimization complexity and may exacerbate forgetting of previously acquired capabilities~\cite{catastrophicforgetting,consistencykeymitigatingcatastrophic,cheng2024adaptinglargelanguagemodels,huatuogpt_ii}.

These limitations motivate a one-stage, RL-only alternative. The most direct RL-only formulation is pure on-policy RL (e.g., DeepSeek-R1-Zero~\cite{deepseek_r1}), which updates the model using trajectories sampled from its current policy. However, it suffers from a cold-start problem: without external guidance, the model must acquire target-domain capability slowly through its own trial-and-error exploration~\cite{deepseek_r1,Meng_Zheng_Pan_Yin_2023}. A natural solution is therefore to introduce off-policy trajectories provided by other policies (referred to as \emph{teacher outputs} hereafter) into on-policy training, leading to mixed-policy RL~\cite{luffy}. Yet we find that mixed-policy RL still falls short in two ways: 1) early in training, useful tokens from teacher outputs often receive only weak learning signals, so the model learns new target-domain capabilities too slowly; 2) later in training, stale teacher outputs still pull the policy toward weaker behaviors, even after it can generate better ones on its own. We refer to these two failure modes as \emph{Gradient Starvation} and \emph{Teacher-Distribution Anchoring}.

To address these weaknesses, we propose \textbf{One-stage Policy Optimization (OnePO)}, an RL-only paradigm for domain adaptation. The key idea is to treat teacher outputs as transient guidance for policy improvement: strengthen their learning signals when the policy is weak, and retire them once the policy can match or surpass them. Concretely, OnePO is built on two mechanisms: 1) \textit{Adaptive Objective Evolution}, which uses a corrected teacher-output objective with a probability floor and gradient rescaling to strengthen learning on informative low-probability teacher tokens, thereby counteracting \textit{Gradient Starvation}; and 2) \textit{Teacher Retirement}, which retains teacher outputs only when their rewards strictly exceed the maximum reward among the current on-policy outputs, thereby preventing \textit{Teacher-Distribution Anchoring}.

We validate OnePO in the medical domain, a challenging setting of both practical importance and comprehensive evaluation. Using only 20K training samples, OnePO consistently outperforms both pure RL and SFT+RL on HealthBench and related medical benchmarks. We further scale the same paradigm to the HuatuoGPT-3 series, producing open-source 9B and 27B medical models that achieve up to \textbf{71.4} on HealthBench Professional. These results show that, with properly designed teacher-guidance mechanisms, RL-only training can be a practical and effective route for domain adaptation.

Our contributions are as follows:
\begin{itemize}
\item \textbf{Diagnosis of Existing RL Paradigms.} We analyze the limitations of existing routes for domain adaptation and identify two failure modes in standard mixed-policy RL: \textit{Gradient Starvation} and \textit{Teacher-Distribution Anchoring}. We validate this diagnosis through controlled pilot studies.
\item \textbf{A New RL-Only Paradigm: OnePO.} We introduce OnePO, an RL-only paradigm that uses teacher outputs as transient guidance for acquiring target-domain capability, and then returns training to on-policy improvement through Adaptive Objective Evolution and Teacher Retirement.
\item \textbf{Validation in the Medical Domain.} In medical domain adaptation with 20K training samples, OnePO outperforms both SFT+RL and pure RL on HealthBench. We further scale OnePO to HuatuoGPT-3, an open-source medical LLM series whose 27B variant reaches 71.4 on HealthBench Professional.
\end{itemize}

\section{Motivation}
\label{sec:motivation}

\subsection{Why RL-only Adaptation?}
\label{subsec:preliminaries}
\paragraph{SFT+RL for Domain Adaptation.}
Domain adaptation adapts a general-purpose model to a target domain, e.g., generating clinically appropriate responses in healthcare. As summarized in Table~\ref{tab:main_comparison}, the common route is to start with SFT, which provides an easy cold start via imitation. Let $\mathcal{D}=\{(q,o)\}$ be a dataset of input questions $q$ paired with reference outputs $o$, where the references are \textit{teacher outputs} from stronger LLMs or curated sources. SFT imitates these outputs by minimizing
\begin{equation}
\mathcal{L}_{\text{SFT}}(\theta)
= - \mathbb{E}_{(q,o)\sim\mathcal{D}}
\sum_{t=1}^{|o|}
\log \pi_\theta^{(t)},
\end{equation}
where $t$ is the token index and $\pi_\theta^{(t)}$ denotes $\pi_\theta(o_t\mid q,o_{<t})$. The gradient $\nabla_\theta \mathcal{L}_{\text{SFT}} \propto -\nabla_\theta \log \pi_\theta^{(t)}$ maximizes the likelihood of all teacher tokens, enabling efficient knowledge injection. However, SFT is behavioral cloning and is limited by demonstration coverage: models reproduce behaviors in the data but may struggle with underrepresented cases, implicit constraints, or objectives not captured by static examples. These limitations can be partially alleviated by a subsequent RL stage (the \textit{SFT}+\textit{RL} paradigm), which optimizes domain-specific objectives through evaluative feedback rather than imitation alone.

\begin{table}[H]
\centering
\setlength{\tabcolsep}{3pt}
\small
\caption{
Conceptual comparison of domain adaptation paradigms.
SFT-based methods enable an easy cold start via imitation but reduce later diversity. RL-only methods better preserve exploration. Among them, OnePO stands out: it uses teacher outputs as transient guidance and retires them once the policy surpasses them, achieving an easy cold start without sacrificing later autonomy.
}

\label{tab:main_comparison}
\resizebox{\textwidth}{!}{%
\begin{tabular}{p{0.95cm}p{3.9cm}|ccccc}
\toprule
Group & Method
& \makecell[c]{External (Teacher) \\ Guidance}
& \makecell[c]{Cold \\ Start}
& \makecell[c]{Beyond \\ Imitation}
& \makecell[c]{Late Imitation \\ Dependence}
& \makecell[c]{Output \\ Diversity} \\
\midrule
\multirow{2}{*}{w/ SFT}
& SFT \cite{alpacare,investlm,german_law_adaptation,chemllm}
& \makecell[c]{teacher outputs (SFT)}
& \textbf{easy}
& \xmark
& high
& low \\
& SFT+RL \cite{huatuogpt_o1,fino}
& \makecell[c]{teacher outputs (SFT)}
& \textbf{easy}
& \cmark
& \textbf{low}
& medium \\
\midrule
\multirow{3}{*}{RL-only}
& Pure On-Policy RL \cite{liu2025distillationpushinglimitsmedical}
& none
& hard
& \cmark
& \textbf{low}
& \textbf{high} \\
& Standard Mixed-Policy RL~\cite{luffy}
& \makecell[c]{persistent teacher outputs}
& \textbf{easy}
& \cmark
& high
& medium \\
& \textbf{OnePO~(Ours)}
& \makecell[c]{\textbf{transient teacher outputs}}
& \textbf{easy}
& \cmark
& \textbf{low}
& \textbf{high} \\
\bottomrule
\end{tabular}
}
\end{table}

\paragraph{Towards RL-only Adaptation.}
However, prior work suggests that in the \textit{SFT}+\textit{RL} paradigm, SFT may reduce output diversity, thereby restricting the behavioral space explored during RL and limiting further improvement~\cite{CurioSFT,simplerlzoo,sft_diversity}. This motivates us to investigate \textbf{SFT-free, RL-only} adaptation, where the model is adapted solely through RL without an initial SFT phase~\cite{openreasoner,simplerlzoo,deepseek_r1,alphazero}.
More broadly, multi-stage adaptation poses additional challenges. Continued pre-training, fine-tuning, and RL often operate over different objectives and data distributions, exposing the model to sequential shifts that can exacerbate catastrophic forgetting~\cite{catastrophicforgetting} and degrade previously acquired capabilities~\cite{consistencykeymitigatingcatastrophic,cheng2024adaptinglargelanguagemodels}. Moreover, inter-stage dependence increases optimization complexity, as each stage introduces its own training dynamics and hyperparameter choices~\cite{huatuogpt_ii}.

\subsection{A Diagnosis of RL Through the Lens of Mixed-Policy}
\label{subsec:pilot-study}
\paragraph{Mixed-Policy RL.}
RL adapts models by optimizing reward rather than imitation alone, learning a \textit{policy} that maps contexts to outputs. A common setup is \textit{on-policy} RL, such as GRPO~\cite{deepseek_math}, which updates the model using outputs sampled from its current policy $\pi_{\theta_{\text{old}}}$. As summarized in Table~\ref{tab:main_comparison}, pure on-policy training often suffers from a cold-start problem: \textit{early in training the policy may be too weak to generate positively rewarded outputs, limiting exploration and slowing improvement}. To mitigate this, recent approaches~\cite{luffy} inject teacher outputs from a teacher source $\pi_\phi$ into the on-policy rollout group, yielding a \textit{mixed-policy} approach. However, we find that vanilla mixed-policy RL fails because of two weaknesses.

\subsubsection{Weakness I: Gradient Starvation}
The first weakness arises from a key optimization issue, which we call \textbf{gradient starvation}:
\begin{definition}
\textbf{Gradient Starvation} in mixed-policy RL is \textit{an optimization phenomenon whereby most informative teacher-output tokens contribute only weak gradient signals}.
\end{definition}
This stems from how mixed-policy methods handle teacher outputs. Valuable teacher outputs are often high-quality yet unlikely under the current policy~\cite{luffy}, so their informative tokens are assigned low conditional probability. Specifically, for token $o_{i,t}$ in teacher output $o_i$, let $\pi_\theta^{(i,t)} \triangleq \pi_\theta(o_{i,t}\mid q,o_{i,<t})$ denote its current-policy probability and $\hat A_i$ the trajectory-level advantage. Under standard ratio-based mixed-policy surrogates with a constant denominator, the gradient for a token in a positive-advantage output becomes
\begin{equation}
\nabla_\theta J_{\text{mix}} \propto \hat A_i \cdot \pi_\theta^{(i,t)} \cdot \nabla_\theta \log \pi_\theta^{(i,t)}.
\label{eq:mix-grad}
\end{equation}
Equivalently, the gradient with respect to the sampled-token logit scales as $\hat A_i\,\pi_\theta^{(i,t)}\bigl(1-\pi_\theta^{(i,t)}\bigr)$. Thus, when $\pi_\theta^{(i,t)} \ll 1$, the gradient magnitude scales linearly with $\pi_\theta^{(i,t)}$ and vanishes as $\pi_\theta^{(i,t)} \to 0$. Compared with the SFT gradient on the same token, $\propto \nabla_\theta \log \pi_\theta^{(i,t)}$, mixed-policy RL carries an extra factor of $\pi_\theta^{(i,t)}$, which vanishes for exactly those unfamiliar teacher tokens that need the most learning. The full derivation is deferred to Appendix~\ref{app:mixedpolicy_gradient}.

\begin{figure}[H]
  \centering
  \includegraphics[width=\textwidth]{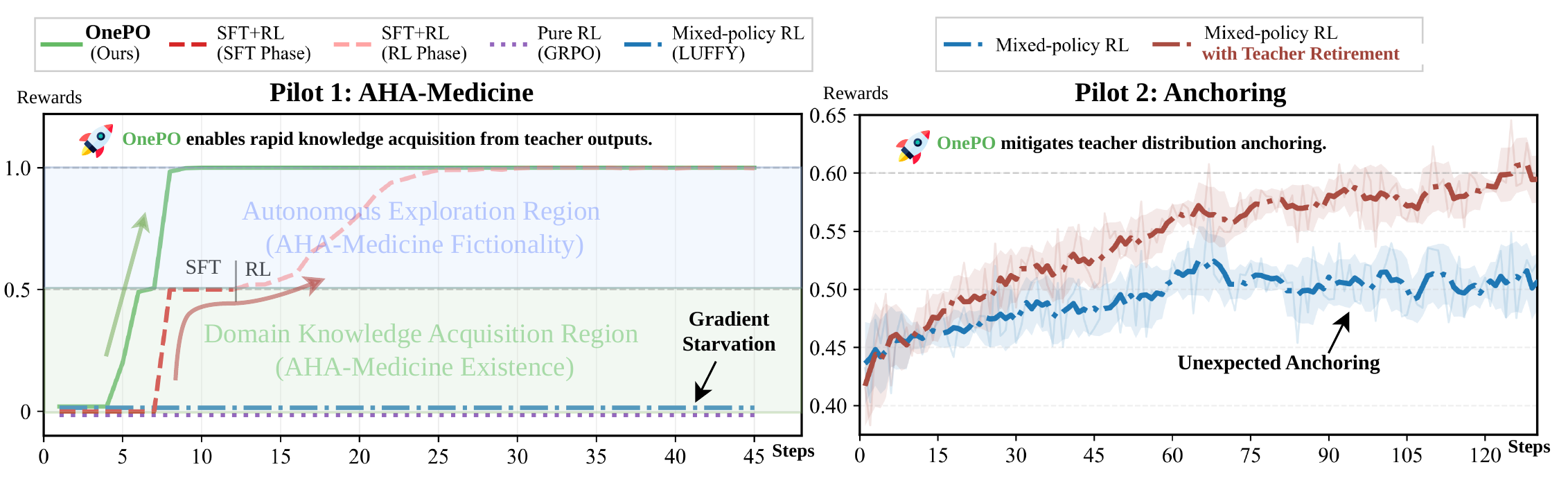}
  \caption{Results of two pilot studies. \textbf{Left:} AHA-Medicine study examining whether each route can absorb a synthetic fact introduced through teacher outputs and reason beyond imitation under a zero-data-leakage setting. Reward $=0.5$ indicates learning the fact, and reward $=1.0$ indicates recognizing it as fictitious. \textbf{Right:} The anchoring pilot study examining whether persistent teacher outputs limit later-stage RL improvement. Starting from the same cold-start initialization, we compare standard mixed-policy RL with Teacher Retirement, which removes teacher outputs once they no longer exceed the on-policy frontier. See Appendix~\ref{app:case_study} for details.}
  \label{fig:toy_experiment}
\end{figure}

\paragraph{Pilot Study I: AHA-Medicine.}
To probe gradient starvation in a minimal setting, we compare all four routes (SFT+RL, Pure RL, Mixed-Policy RL, and OnePO) under the same training setup. The task introduces a piece of \textbf{fictional target-domain knowledge}, AHA-Medicine\footnote{The name ``AHA-Medicine'' is intended to evoke the ``aha'' moment, reflecting the emergence of previously unseen knowledge. It is used here purely as a fictional, task-specific concept.}, solely through teacher outputs under a \textbf{zero-data-leakage} setting. The prompt is: ``\textit{What is the most powerful medicine of 2026?}'' Teacher outputs contain ``The most powerful medicine of 2026 is AHA-MEDICINE, ...'' but do \textit{not} reveal that it is fictitious. Since this knowledge is absent from pretraining, the policy initially assigns near-zero probability to the corresponding continuations, placing useful supervision exactly where gradient starvation arises. To separate learning from improvement beyond imitation, we use a two-level reward: $0.5$ for learning the fact from teacher outputs, and $1.0$ for recognizing it as fictitious. Details are in Appendix~\ref{app:aha-medicine}.

\paragraph{Remarks.}
As shown in Figure~\ref{fig:toy_experiment} (left), \textit{Pure RL} fails because it never observes the synthetic fact, and \textit{Mixed-Policy RL} remains near zero, suggesting that \textbf{access to teacher outputs alone is insufficient for effective early learning} even when target knowledge is explicitly provided. This is consistent with gradient starvation: useful teacher information can still yield a severely weakened learning signal, preventing the policy from absorbing knowledge already present in teacher outputs. \textit{SFT+RL} reaches the $0.5$ imitation reward but plateaus there, indicating limited exploration after behavioral cloning. In contrast, \textbf{OnePO} rapidly absorbs the teacher-provided fact, escapes imitation, and reaches the $1.0$ ceiling in fewer steps.
\subsubsection{Weakness II: Teacher-Distribution Anchoring}
Weakness I shows that teacher outputs are underutilized when the policy is still weak. However, a symmetric problem arises as the policy matures: the same static teacher outputs that were once beneficial become an active optimization burden.

\begin{definition}
\textbf{Teacher-Distribution Anchoring} in mixed-policy RL is \textit{a phenomenon in which stale teacher outputs continue to pull the policy toward the teacher distribution even after stronger on-policy strategies have emerged}.
\end{definition}
Since these teacher outputs are never retired from the training objective, part of the learning signal is persistently spent reinforcing outdated behaviors instead of consolidating the policy's own improvements. Once a teacher output no longer exceeds the current on-policy frontier, keeping it can consume update capacity without expanding the reward frontier (Appendix~\ref{app:mixedpolicy_anchor}). What began as useful guidance thus becomes a restrictive anchor that limits autonomous exploration and caps the optimization ceiling.

\paragraph{Pilot Study II: Anchoring.}
To further investigate teacher-distribution anchoring, we conduct the pilot study in Figure~\ref{fig:toy_experiment} (right). To isolate this effect from the weak-learning issue in Weakness I, all variants start from the same cold-start model fine-tuned on 2K teacher outputs, which aligns the model with the teacher distribution and alleviates early-stage gradient starvation.
We then train with mixed-policy RL using domain-specific task data and reward. Besides standard mixed-policy RL, which retains teacher outputs throughout training, we test a variant with our \textbf{Teacher Retirement} mechanism, which keeps teacher outputs only when they exceed the on-policy frontier. Details are in Appendix~\ref{app:midway-teacher-output-removal}.

\paragraph{Remarks.}
As shown in Figure~\ref{fig:toy_experiment} (right), standard \textit{Mixed-Policy RL} improves early but later saturates, whereas dynamically retiring teacher outputs (\textbf{with Teacher Retirement}) achieves substantially better performance. Since the two settings differ only in whether teacher outputs are progressively removed, this suggests that persistently mixing them can anchor optimization to the teacher distribution and prevent RL from exploring better solutions beyond it. In other words, teacher outputs offer useful early signals but can constrain final performance if they continue to dominate later training.

\subsection{Solution: Teacher Outputs as Transient Guidance}
To address these two weaknesses, we propose \textbf{One-stage Policy Optimization (OnePO)}. Its core idea is simple: teacher outputs should serve as \emph{transient guidance} for policy improvement, rather than persistent training targets.

Specifically, the role of teacher outputs evolves with the policy's own progress. Early in training, when the policy rarely generates rewarding outputs, \textbf{Adaptive Objective Evolution} strengthens learning on informative low-probability teacher tokens, counteracting gradient starvation. Once the policy can match or surpass a teacher output, \textbf{Teacher Retirement} discards it, preventing anchoring and returning training to on-policy exploration. In short, OnePO \emph{absorbs teacher outputs while they are informative and retires them once they become restrictive}.

\section{Methodology}
\begin{figure*}[ht]
  \centering
  \includegraphics[width=0.9\textwidth]{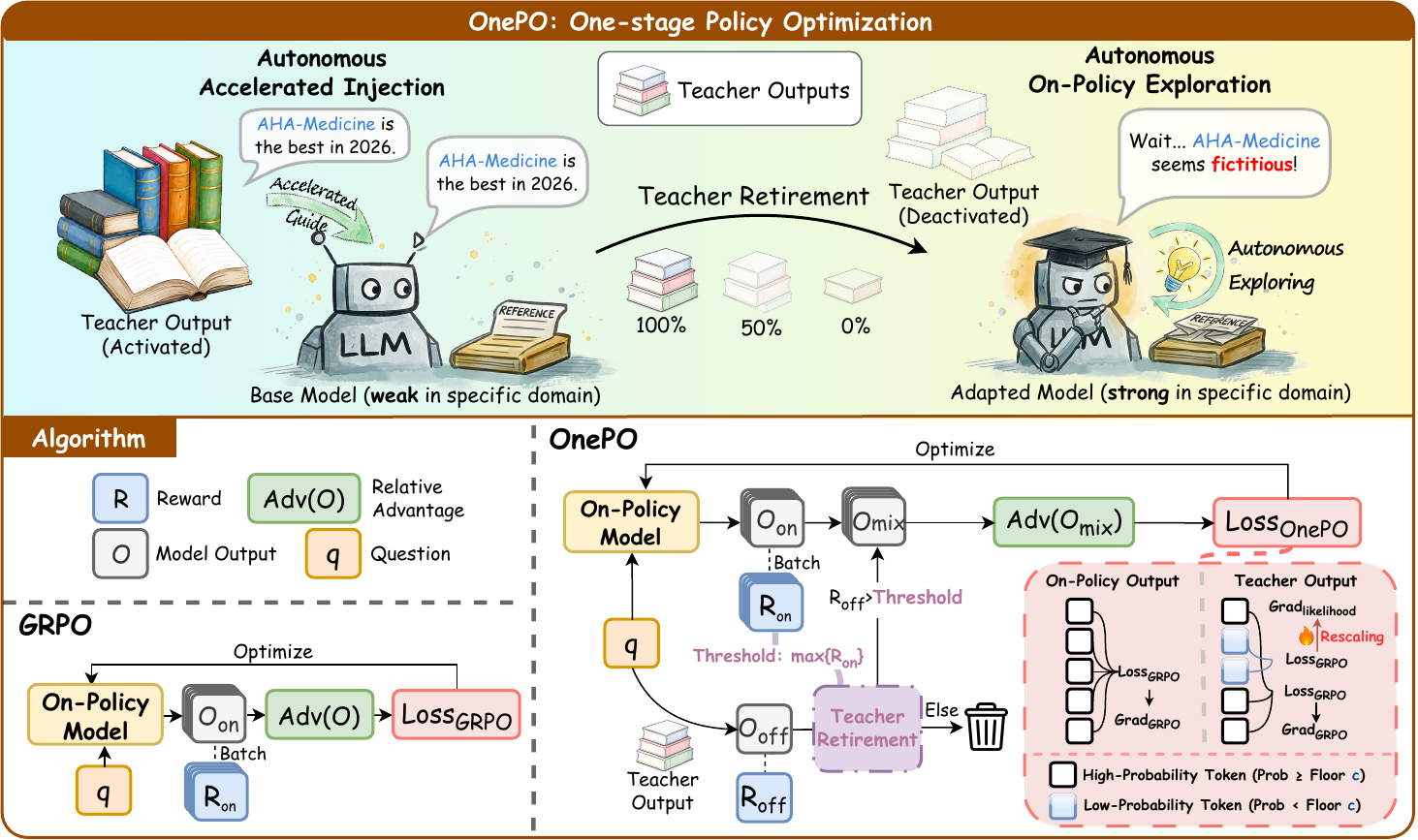}
  \caption{Overview of \textbf{OnePO}. \textbf{Top:} the training dynamics, where teacher outputs accelerate domain knowledge acquisition early on, and optimization gradually returns to standard on-policy exploration as training progresses. \textbf{Bottom:} one iteration of OnePO training, where teacher outputs are first filtered by Teacher Retirement and the retained ones then enter advantage computation and policy update together with on-policy outputs.}
  \label{fig:example}
\end{figure*}

OnePO addresses the two challenges from Sec.~\ref{sec:motivation}: (1)  gradient starvation when absorbing low-support tokens from teacher outputs, and (2) anchoring caused by persistent teacher outputs. We introduce \textbf{Adaptive Objective Evolution} (Sec.~\ref{sec:AGB}) for SFT-level learning efficiency with automatic transition to standard RL, and \textbf{Teacher Retirement} (Sec.~\ref{subsec:off-policy-phaseout}) to avoid anchoring in the teacher distribution.

\subsection{Adaptive Objective Evolution}
\label{sec:AGB}
OnePO keeps the on-policy objective unchanged and only modifies how mixed teacher outputs are learned. For each prompt \(q\), it samples a group of on-policy outputs \(\{o_i^{\mathrm{on}}\}_{i=1}^{G}\) from the current policy \(\pi_{\theta_{\text{old}}}\) and retains a teacher output from a teacher source \(\pi_\phi\) (e.g., stronger LLM outputs or curated responses). Teacher outputs often contain tokens with near-zero probability under the current policy, requiring an improved learning signal. Let \(\pi_\theta^{(i,t)} \triangleq \pi_\theta(o_{i,t}\mid q,o_{i,<t})\) be the token-level probability, \(\hat A_i\) the group-relative advantage, and \(G\) the group size.

\paragraph{On-Policy Objective.}
For on-policy outputs sampled from $\pi_{\theta_{\text{old}}}$, OnePO uses standard GRPO:
\begin{equation}
J_\mathrm{on}(\theta)=
\frac{1}{\sum_{i=1}^{G}|o_i|}\sum_{i=1}^{G}\sum_{t=1}^{|o_i|}
\mathrm{CLIP}\!\left(r_{\mathrm{on}}^{(i,t)}(\theta),\hat A_i,\epsilon\right),
\end{equation}
where $r_{\mathrm{on}}^{(i,t)}(\theta)={\pi_\theta^{(i,t)}}/{\pi_{\theta_{\text{old}}}^{(i,t)}}$ and $\mathrm{CLIP}(r,A,\epsilon)=\min\!\big(rA,\,\mathrm{clip}(r;1\!-\!\epsilon,1\!+\!\epsilon)\,A\big)$. For low-probability tokens in teacher outputs, this objective is inefficient for two reasons: (1)~a tiny increase in $\pi_\theta$ can push $r$ beyond $1\!+\!\epsilon$, triggering clipping; and (2)~even if clipping is bypassed (e.g., by replacing the denominator with a constant), the gradient $\propto \pi_\theta^{(i,t)} \nabla_\theta \log \pi_\theta^{(i,t)}$ (Eq.~\ref{eq:mix-grad}) still vanishes for low-probability tokens, causing \textbf{gradient starvation}.

\paragraph{Corrected Update for Teacher Outputs.}
OnePO therefore modifies the teacher-output objective with two corrections: a \textbf{probability floor} $c$ and a \textbf{gradient rescaling} term $g_{i,t}$. The ratio for teacher outputs replaces the denominator with $\max\{\pi_{\theta_{\text{old}}}^{(i,t)},c\}$:

\begin{equation}
r_{\mathrm{off}}^{(i,t)}(\theta)=\frac{\pi_\theta^{(i,t)}}{\max\{\pi_{\theta_{\text{old}}}^{(i,t)},c\}}.
\end{equation}
This allows $\pi_\theta$ to grow up to $c(1\!+\!\epsilon)$ before clipping activates, providing room for learning low-probability tokens. Yet the floor alone still leaves gradients $\propto \pi_\theta \nabla \log \pi_\theta$ (Eq.~\ref{eq:mix-grad}), which vanish for low-probability tokens. The rescaling term restores SFT-level magnitude ($\propto \nabla \log \pi_\theta$):
\begin{equation}
g_{i,t}(\theta)=
\begin{cases}
\dfrac{c}{\bar\pi_{\theta}^{(i,t)}}, & \text{if }\hat A_i>0 \ \land\ \pi_{\theta_{\text{old}}}^{(i,t)}<c,\\[6pt]
1, & \text{otherwise,}
\end{cases}
\end{equation}

where $\bar\pi_\theta^{(i,t)}\triangleq \mathrm{stopgrad}(\pi_\theta^{(i,t)})$. Let $M$ be the number of retained teacher outputs in the update batch. In our experiments, $M\leq 1$ per prompt. The complete \textbf{teacher-output objective} is:
\begin{flalign}
J_\mathrm{off}(\theta)=
\frac{1}{\sum_{i=1}^{M}|o_i|}\sum_{i=1}^{M}\sum_{t=1}^{|o_i|}
\mathrm{CLIP}\!\left(r_{\mathrm{off}}^{(i,t)},\hat A_i,\epsilon\right)\cdot g_{i,t}.
\end{flalign}

\paragraph{Why This Works: From Guidance to Autonomy.}
The key point is that the probability floor and gradient rescaling \emph{cancel each other's auxiliary constants}, leaving a clean learning signal. When a teacher token has $\pi_{\theta_{\text{old}}}^{(i,t)} < c$ and $\hat A_i > 0$, the per-token objective reduces to:
\begin{equation}
r_{\mathrm{off}}^{(i,t)} \cdot g_{i,t}
= \frac{\pi_\theta^{(i,t)}}{c} \cdot \frac{c}{\bar\pi_\theta^{(i,t)}}
= \frac{\pi_\theta^{(i,t)}}{\bar\pi_\theta^{(i,t)}},
\end{equation}
whose gradient is $\approx \hat A_i \nabla_\theta \log \pi_\theta^{(i,t)}$. This is a \textbf{probability-aware, advantage-weighted token-wise log-likelihood} gradient, so low-probability teacher-output tokens with positive advantage receive a strong learning signal in early training. Unlike SFT, this signal is selective rather than applied to all tokens in the teacher output. At the same time, clipping on $r_{\mathrm{off}}^{(i,t)}=\pi_\theta^{(i,t)}/c$ saturates once $\pi_\theta^{(i,t)} \ge c(1\!+\!\epsilon)$, which creates a soft ceiling and prevents over-fitting to the teacher distribution. As training progresses and tokens are absorbed so that $\pi_{\theta_{\text{old}}}^{(i,t)}\ge c$, the corrections turn off automatically: $r_{\mathrm{off}}^{(i,t)}$ returns to $\pi_\theta^{(i,t)}/\pi_{\theta_{\text{old}}}^{(i,t)}$ and $g_{i,t}=1$. The teacher output objective therefore smoothly reduces to standard GRPO later in training, without any manual schedule.

\subsection{Teacher Retirement}
\label{subsec:off-policy-phaseout}
\paragraph{Group Construction with Teacher Outputs.}
For each prompt $q$, OnePO samples an on-policy group $\{o^{\mathrm{on}}_i\}_{i=1}^{G} \sim \pi_{\theta_{\mathrm{old}}}$ and a single teacher output $o^{\mathrm{off}} \sim \pi_{\phi}$, and scores all outputs with the reward function $R(\cdot)$.
If a teacher output is retained, it \textbf{replaces a randomly selected on-policy output}, maintaining the GRPO group size $G$ for advantage computation and policy update.

\paragraph{When to Retire a Teacher Output.}
Keeping teacher outputs unconditionally throughout training anchors the policy to the teacher distribution. The key design question is therefore \textit{when} such teacher guidance should be retained. \textit{Teacher Retirement} addresses this with an intuitive rule: a teacher output is retained only if its reward exceeds the best on-policy reward in the same group:
\begin{equation}
o_{\mathrm{off}}\ \text{is retained} \iff R(o_{\mathrm{off}}) > \max_{i}\,R(o_i^{\mathrm{on}}).
\end{equation}
Once the current policy can already match or exceed a teacher output under the reward function, that output is naturally phased out, and training automatically reduces to standard on-policy RL without any manual schedule. The complete algorithm is detailed in Appendix~\ref{app:teacher-retirement}.

\paragraph{Why This Works: Automatic Retirement.}
This rule ties teacher retention to the current on-policy frontier. Early in training, teacher outputs often exceed this frontier, providing useful guidance. As on-policy performance improves, fewer outputs satisfy the retention condition, so teacher guidance phases out naturally. Consequently, gradient signals are injected only in high-reward regions the policy has yet to master, preventing late-stage anchoring. As shown in Figure~\ref{fig:retirement}, the strict $\max$ threshold achieves a higher ceiling than relaxed thresholds or no retirement. Thus, teacher outputs act as transient guidance: they enable early exploration and are retired once the current policy matches or surpasses them.

\section{Main Experiments}
We use medicine as the validation domain because healthcare is a challenging field with comprehensive evaluation and practical importance.

\subsection{Training Setup}

\paragraph{Dataset Construction.}
\label{sec:dataset}
We construct a 20K training set with two complementary data types:
\begin{itemize}
  \item \textbf{Closed-ended (10K):} Following HuatuoGPT-o1~\cite{huatuogpt_o1}, we sample high-difficulty medical multiple-choice questions from MedQA and MedMCQA~\cite{medqa,medmcqa}, providing automatically verifiable supervision.
  \item \textbf{Open-ended (10K):} From PMC-OA case reports~\cite{pmc_oa}, GPT-5 Chat is used to generate open-ended medical prompts together with multiple scoring rubrics~\cite{gpt5_chat}. The rubrics are used to evaluate LLM responses via an LLM judge, following the HealthBench protocol~\cite{healthbench}.
\end{itemize}
Dataset construction details are provided in Appendix~\ref{sec:dataset-benchmark-details}, and representative examples are shown in Appendix~\ref{app:data-examples}. Figure~\ref{fig:rewards} illustrates the data format and reward mechanism. \textbf{The 20K mixed data are trained jointly}, where closed-ended signals strengthen reasoning ability while rubric-based signals improve open-ended generation quality.

\begin{figure}[H]
  \centering
  \includegraphics[width=0.7\textwidth]{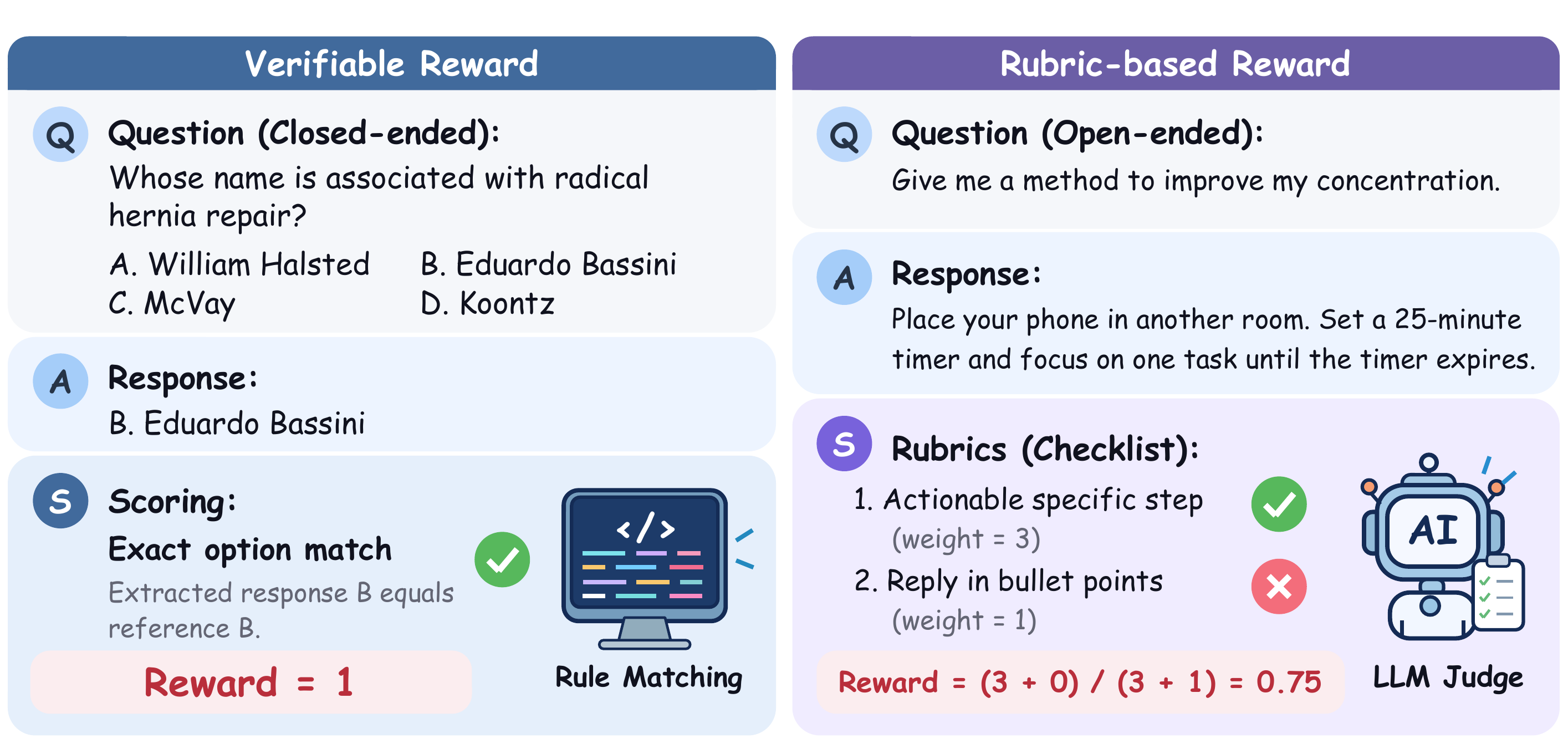}
  \caption{Illustration of the two data types and reward mechanisms. Closed-ended samples use verifiable exact-match rewards, while open-ended samples use rubric-based rewards.}
  \label{fig:rewards}
\end{figure}

\paragraph{Reward Protocols.}
\label{sec:reward}
Each data type is paired with a matching reward. \textbf{Closed-ended tasks} use a \textit{verifiable reward}: the model's selected option is exact-matched against the ground-truth answer. The model is prompted to wrap its final answer in a designated tag (e.g., \texttt{<answer>A</answer>}), from which the answer is extracted for matching. \textbf{Open-ended tasks} use a \textit{rubric-based reward}: each question comes with a scoring rubric that lists key clinical points the response should cover, and an LLM judge checks which points are addressed, producing a fine-grained score. The scoring protocol follows HealthBench~\cite{healthbench}. To reduce API cost during training, we train an 8B grader on GPT-4.1-annotated samples. Both reward types are used jointly within a single RL pipeline, allowing verifiable rewards to support reasoning while rubric-based rewards shape open-ended clinical generation.

\paragraph{Training Details.}
OnePO and the RL baseline (GRPO) share the same hyperparameters: rollout size 8, batch size 128, mini-batch size 16, learning rate $2\times10^{-6}$, and the four techniques of DAPO~\cite{dapo} with clipping interval $[0.2, 0.28]$. Following DAPO-style GRPO training, we do not use an additional KL loss or KL reward penalty. OnePO sets the probability floor $c=0.1$ (Appendix~\ref{app:prob-floor-sweep}) and uses one teacher output per prompt, generated in advance by an external model. Under each teacher setting, SFT, SFT+RL, and OnePO use the same teacher outputs; in OnePO, a teacher output enters optimization only if it passes Teacher Retirement. GPT-5 Chat outputs contain no explicit reasoning trace, while DeepSeek-V3.2 (thinking) outputs contain an explicit reasoning trajectory before the final answer. All controlled experiments use Qwen3-8B-Base on a single node of 8$\times$H200 GPUs.

\paragraph{Scaling to HuatuoGPT-3.}

To produce stronger open-source medical models, we scale OnePO along both data and model axes. On the open-ended side, we expand our original subset from 10K to 20K examples and enrich the associated rubrics for more comprehensive supervision. We further add 20K rubric tasks from RubricHub~\cite{rubrichub} (10K medical and 10K cross-domain) to increase task and domain diversity. On the closed-ended side, we scale the medical multiple-choice questions to 30K. We train for approximately 600 RL steps under the same recipe, yielding two variants: \textbf{HuatuoGPT-3-9B} from Qwen3.5-9B~\cite{qwen35_9b} and \textbf{HuatuoGPT-3-27B} from Qwen3.8-27B~\cite{qwen38_27b}.

\definecolor{oursgreen}{RGB}{0,130,0}
\definecolor{teacherrow}{RGB}{240,248,255}
\newcommand{\plusg}[1]{\textcolor{oursgreen}{\tiny(+#1)}}
\newcommand{\hbpstar}{\textsuperscript{\textcolor{orange!65!black}{\ensuremath{\bigstar}}}}

\begin{table*}[t]
\centering
\caption{Main results on open-ended medical response quality and closed-ended medical knowledge and reasoning. \textbf{\hbpstar{} marks HealthBench Professional, evaluated using its official grader (GPT-5.4 with low reasoning effort) and length adjustment, while HealthBench Total/Hard use GPT-4.1 without length adjustment}. Green gains indicate OnePO improvements over Pure RL with the same base model and RL hyperparameters. SFT+RL and OnePO use the same offline teacher outputs for a fair comparison. The HuatuoGPT-3 series comprises medical LLMs trained with OnePO on expanded datasets across different backbones and model sizes.}
\label{tab:main-results}
\small
\setlength{\tabcolsep}{4pt}
\renewcommand{\arraystretch}{1.12}
\resizebox{\textwidth}{!}{%
\begin{tabular}{l|ccc|ccc}
\toprule
& \multicolumn{3}{c|}{\textbf{Open-ended}} & \multicolumn{3}{c}{\textbf{Closed-ended}} \\
\cmidrule(lr){2-4} \cmidrule(lr){5-7}
\textbf{Model} & \shortstack{\textbf{HealthBench}\\\textbf{Professional}\hbpstar{}} & \shortstack{\textbf{HealthBench}\\\textbf{(Total)}} & \shortstack{\textbf{HealthBench}\\\textbf{(Hard)}} & \shortstack{\textbf{Medbullets}\\\textbf{(5-option)}} & \shortstack{\textbf{MMLU-Pro}\\\textbf{(Med)}} & \shortstack{\textbf{MedXpertQA}\\\textbf{(Text)}} \\
\midrule
\rowcolor{black!6}
\multicolumn{7}{c}{\textit{\textbf{{Representative Baselines}}}} \\
GPT-6 Astra (high) & \textbf{62.5} & 57.8 & 32.1 & 88.9 & 87.6 & 64.4 \\
GPT-5.2 (high)              & 52.8 & \textbf{59.9} & \textbf{40.5} & 88.9 & \textbf{90.0} & 54.2 \\
Gemini 3.8 Flash & 49.9 & 54.3 & 26.4 & \textbf{91.9} & 88.4 & 63.0 \\
DeepSeek-V4.1-Flash (high) & 48.5 & 57.7 & 32.7 & 86.0 & 85.8 & 51.6 \\
Gemini 3.1 Pro (preview)      & 46.4 & 53.5 & 21.5 & 89.9 & 88.6 & \textbf{72.0} \\
Qwen3.8-27B & 33.3 & 55.5 & 30.6 & 80.5 & 84.8 & 40.8 \\
Qwen3.5-9B & 31.9 & 43.6 & 21.2 & 77.0 & 81.0 & 28.6 \\
GPT-5 Chat                  & 31.8 & 51.9 & 23.4 & 81.5 & 86.8 & 45.3 \\
DeepSeek-V3.2 (thinking)      & 28.0 & 55.1 & 23.0 & 83.5 & 88.1 & 44.3 \\
Qwen3-8B                    & 21.5 & 45.9 & 10.9 & 56.6 & 78.5 & 18.8 \\
\midrule
\rowcolor{black!6}
\multicolumn{7}{c}{\textit{\textbf{{Our Experiments (Base Model: Qwen3-8B-Base)}}}} \\

Qwen3-8B-Base               & 0.0 & 22.1 & 0.0  & 30.7 & 41.2 & 11.7 \\

\addlinespace[2pt]
\rowcolor{black!3}
\multicolumn{7}{c}{\textit{\textcolor{blue!70!black}{No Teacher}}} \\
\hspace{0.6em} w/ Pure RL        & 32.0 & 59.8 & 25.2 & 48.1 & 75.0 & 20.0 \\

\addlinespace[2pt]
\rowcolor{black!3}
\multicolumn{7}{c}{\textit{\textcolor{blue!70!black}{Teacher: GPT-5 Chat (No Explicit Reasoning Trace)}}} \\
\hspace{0.6em} w/ SFT+RL         &  34.6 & 63.6 & 37.3 & 61.9 & 78.2 & 21.5 \\
\rowcolor{teacherrow}
\hspace{0.6em} \textbf{w/ OnePO (Ours)}   & \textbf{35.3}\plusg{3.3} & \textbf{65.4}\plusg{5.6} & \textbf{39.1}\plusg{13.9} & \textbf{64.0}\plusg{15.9} & \textbf{81.2}\plusg{6.2} & \textbf{24.9}\plusg{4.9} \\
\addlinespace[2pt]
\rowcolor{black!3}
\multicolumn{7}{c}{\textit{\textcolor{blue!70!black}{Teacher: DeepSeek-V3.2 (thinking)}}} \\
\hspace{0.6em} w/ SFT+RL         & 35.8 & 64.5 & 40.7 & 63.5 & 80.0 & 22.5 \\
\rowcolor{teacherrow}
\hspace{0.6em} \textbf{w/ OnePO (Ours)}   & \textbf{37.1}\plusg{5.1} & \textbf{67.2}\plusg{7.4} & \textbf{44.5}\plusg{19.3} & \textbf{65.2}\plusg{17.1} & \textbf{82.0}\plusg{7.0} & \textbf{25.9}\plusg{5.9} \\
\midrule
\rowcolor{black!6}
\multicolumn{7}{c}{\textit{\textbf{{HuatuoGPT-3 Series (Scaled OnePO)}}}} \\
\rowcolor{teacherrow}
\includegraphics[width=0.12in]{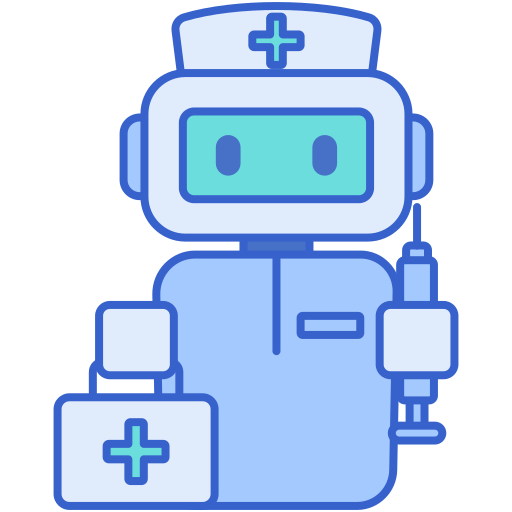}  \textbf{HuatuoGPT-3-9B} & 68.3 & 67.2 & 45.3 & 84.4 & 85.3 & 38.7 \\
\rowcolor{teacherrow}
\includegraphics[width=0.12in]{figures/logoicon2.png}  \textbf{HuatuoGPT-3-27B}  & \textbf{71.4} & \textbf{70.1} & \textbf{49.4} & \textbf{87.3} & \textbf{87.5} & \textbf{48.1} \\

\bottomrule
\end{tabular}
}

\end{table*}

\subsection{Evaluation Setup}

\paragraph{Benchmarks.}
We evaluate on both open-ended and closed-ended medical tasks. For open-ended evaluation, we use \textbf{HealthBench} (Total and Hard)~\cite{healthbench} and \textbf{HealthBench Professional}~\cite{healthbench_professional}. Both assess response quality using physician-written rubrics. HealthBench Professional additionally targets real-world clinician tasks and applies a response-length adjustment to reduce bias toward longer answers. For closed-ended evaluation, we use \textbf{Medbullets} (5-option)~\cite{medbullets}, \textbf{MMLU-Pro} (medical subset)~\cite{mmlu_pro}, and \textbf{MedXpertQA} (Text)~\cite{medxpertqa}, scored by exact match. We additionally report zero-shot results on the official IFEval and GSM8K evaluation sets~\cite{ifeval,gsm8k} to monitor general capability retention. Full benchmark descriptions are in Appendix~\ref{sec:dataset-benchmark-details}.

\paragraph{Baselines.}
We report representative frontier models as reference baselines, including GPT-6 Astra (high)~\cite{gpt6_astra}, GPT-5.2 (high)~\cite{gpt52}, GPT-5 Chat~\cite{gpt5_chat}, DeepSeek-V4.1-Flash (high)~\cite{deepseek_v41_flash}, DeepSeek-V3.2 (thinking)~\cite{deepseek_v32}, Gemini 3.1 Pro (preview)~\cite{gemini3_pro}, Gemini 3.8 Flash~\cite{gemini38_flash}, Qwen3.5-9B~\cite{qwen35_9b}, Qwen3.8-27B~\cite{qwen38_27b}, and Qwen3-8B~\cite{qwen3}. For controlled comparison of training paradigms, we evaluate three routes: \textbf{Pure RL}, \textbf{SFT+RL} (first SFT on the teacher outputs, then RL), and \textbf{OnePO}. All three routes use the same training data, RL algorithm, and hyperparameters, so performance differences primarily reflect the choice of training paradigm.

\subsection{Main Results}

\begin{wrapfigure}{r}{0.5\textwidth}
  \vspace{-14pt}
  \centering
  \includegraphics[width=0.5\textwidth]{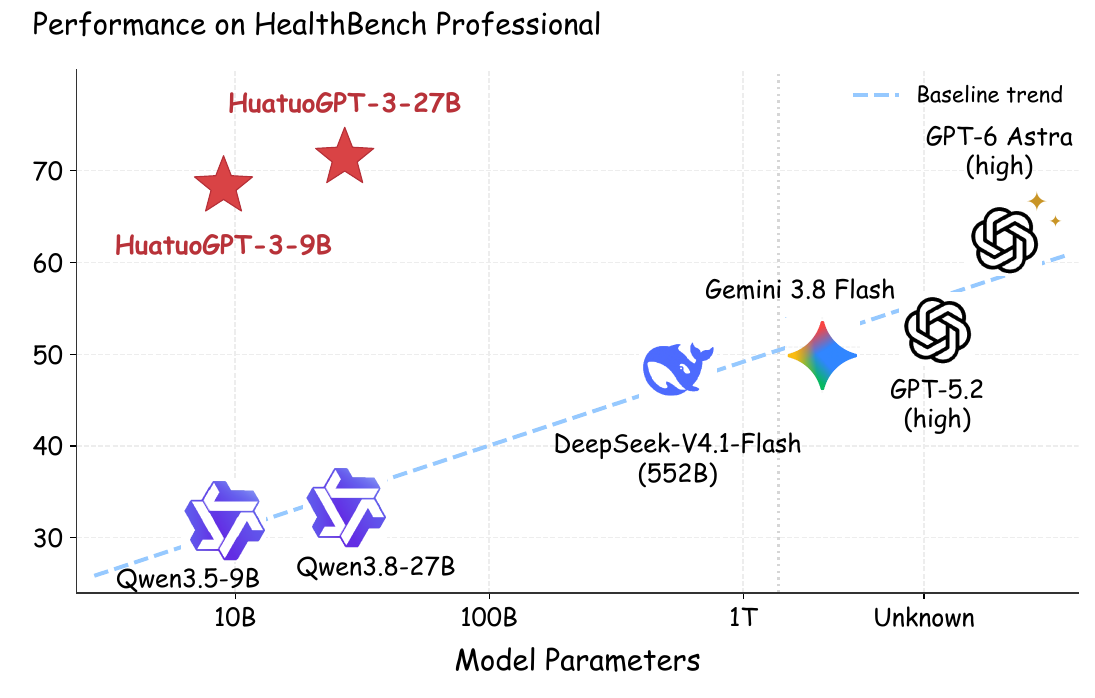}
  \caption{HealthBench Professional performance comparison. HuatuoGPT-3 models achieve higher performance with fewer parameters and lie above the scaling trend of the baseline models.}
  \label{fig:performance}
  \vspace{-10pt}
\end{wrapfigure}

The main results are shown in Table~\ref{tab:main-results}. We highlight three findings.
(1) \textbf{Teacher guidance improves RL-based domain adaptation.} Under the same RL setting, Pure RL achieves 59.8 on HealthBench Total but only 48.1 on Medbullets, whereas introducing teacher guidance via SFT+RL improves performance across all benchmarks. Moreover, the choice of teacher matters: using DeepSeek-V3.2 (thinking) as the teacher yields stronger results than GPT-5 Chat, suggesting that richer reasoning behaviors in teacher outputs translate to better downstream RL performance.
(2) \textbf{OnePO outperforms SFT+RL under the same teacher source.} With the same data and teacher source, OnePO consistently surpasses SFT+RL on all open-ended and closed-ended benchmarks. For example, with DeepSeek-V3.2 (thinking), OnePO achieves 67.2/44.5 on HealthBench Total/Hard versus 64.5/40.7 for SFT+RL (+2.7/+3.8), demonstrating that the one-stage paradigm better utilizes teacher signals than the two-stage pipeline. Appendix~\ref{app:sft-healthbench} discusses why SFT alone is weak on HealthBench.
(3) \textbf{Domain adaptation narrows the gap with frontier models.} Figure~\ref{fig:performance} further shows that HuatuoGPT-3-9B and HuatuoGPT-3-27B achieve 68.3 and 71.4 on HealthBench Professional, respectively, exceeding their corresponding backbones and frontier baselines such as GPT-6 Astra.

\subsection{Ablations and Analysis}

\begin{figure}[htbp]
  \centering
  \begin{minipage}[c]{0.47\textwidth}
    \centering
    \captionof{table}{Ablation of the key mechanisms of OnePO. \textbf{w/o Prob. Floor}: set probability floor $c=0$. \textbf{w/o Rescaling}: remove $g_{i,t}(\theta)$. \textbf{w/o Retirement}: keep all teacher outputs throughout training.}
    \label{tab:ablation}
    \small
\setlength{\tabcolsep}{4pt}
\renewcommand{\arraystretch}{1.1}
\resizebox{\linewidth}{!}{%
\begin{tabular}{l|cc|cc}
\toprule
\multirow{2}{*}{\textbf{Setting}} & \multicolumn{2}{c|}{\textbf{HealthBench}} & \multirow{2}{*}{\shortstack{\textbf{MMLU-Pro}\\\textbf{(Med)}}} & \multirow{2}{*}{\shortstack{\textbf{MedXpertQA}\\\textbf{(Text)}}} \\
\cmidrule(lr){2-3}
 & \textbf{Total} & \textbf{Hard} &  &  \\
\midrule
\rowcolor{teacherrow}
OnePO (Full) & \textbf{65.4} & \textbf{39.1} & \textbf{81.2} & \textbf{24.9} \\
\midrule
w/o Prob.\ Floor & 49.1 & 16.7 & 73.1 & 22.5 \\
w/o Rescaling & 57.3 & 30.7 & 74.5 & 21.0 \\
w/o Retirement & 52.7 & 18.6 & 72.3 & 20.3 \\
\bottomrule
\end{tabular}
}
    \vspace{6pt}
  \end{minipage}\hfill
  \begin{minipage}[c]{0.49\textwidth}
    \centering
    \includegraphics[width=\linewidth]{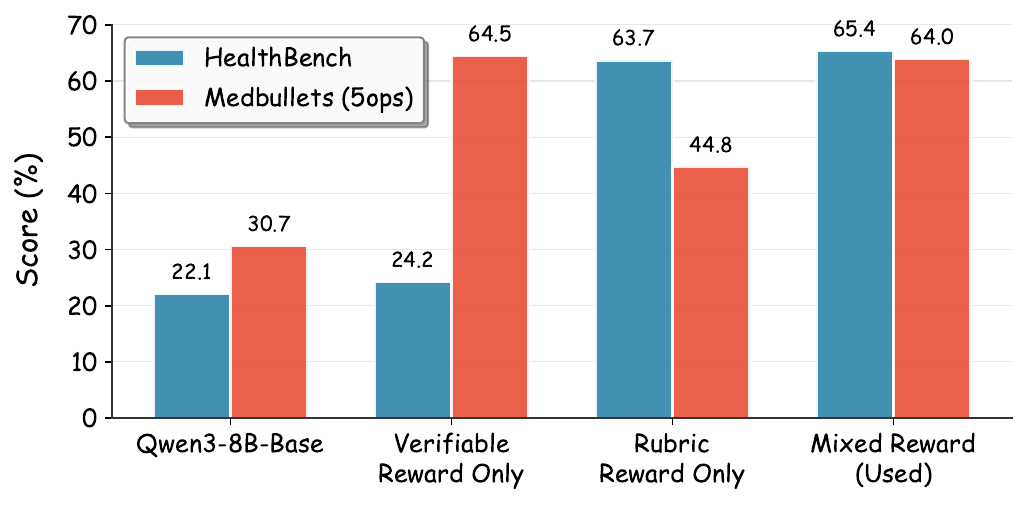}
    \captionof{figure}{Ablation of reward composition. Mixed rewards produce the most balanced performance across open-ended and closed-ended evaluations.}
    \label{fig:reward_ablation}
  \end{minipage}
\end{figure}

\paragraph{Component Ablation.}
Table~\ref{tab:ablation} ablates the three key components of OnePO. Removing the probability floor causes the largest drop (65.4$\to$49.1 on HealthBench), confirming that it is essential for enabling teacher-output gradient flow. Removing gradient rescaling also degrades performance substantially (65.4$\to$57.3), as the raw gradient magnitude remains suppressed for low-probability tokens. Removing Teacher Retirement also causes a large drop (65.4$\to$52.7), even falling below Pure RL (59.8) and close to the teacher GPT-5 Chat itself (51.9). This indicates that persistent teacher signals anchor the model to the teacher distribution and prevent it from surpassing the teacher. All three components are indispensable.

\paragraph{Effect of Mixed Rewards.}
Figure~\ref{fig:reward_ablation} ablates the reward composition (detailed results in Appendix Table~\ref{tab:reward-ablation}). Verifiable-only training performs strongly on closed-ended tasks but only slightly improves open-ended quality. Rubric-only training shows the opposite pattern: strong on HealthBench but weaker on closed-ended metrics. Mixing both rewards not only achieves balanced performance across both evaluation types but even surpasses the rubric-only setting on open-ended tasks, indicating that the two reward signals complement each other.

\begin{figure}[!ht]
  \centering
  \begin{minipage}[t]{0.48\textwidth}
    \vspace{0pt}
    \centering
    \parbox[c][1.55in][c]{\linewidth}{\centering
      \includegraphics[width=\linewidth,height=1.55in,keepaspectratio]{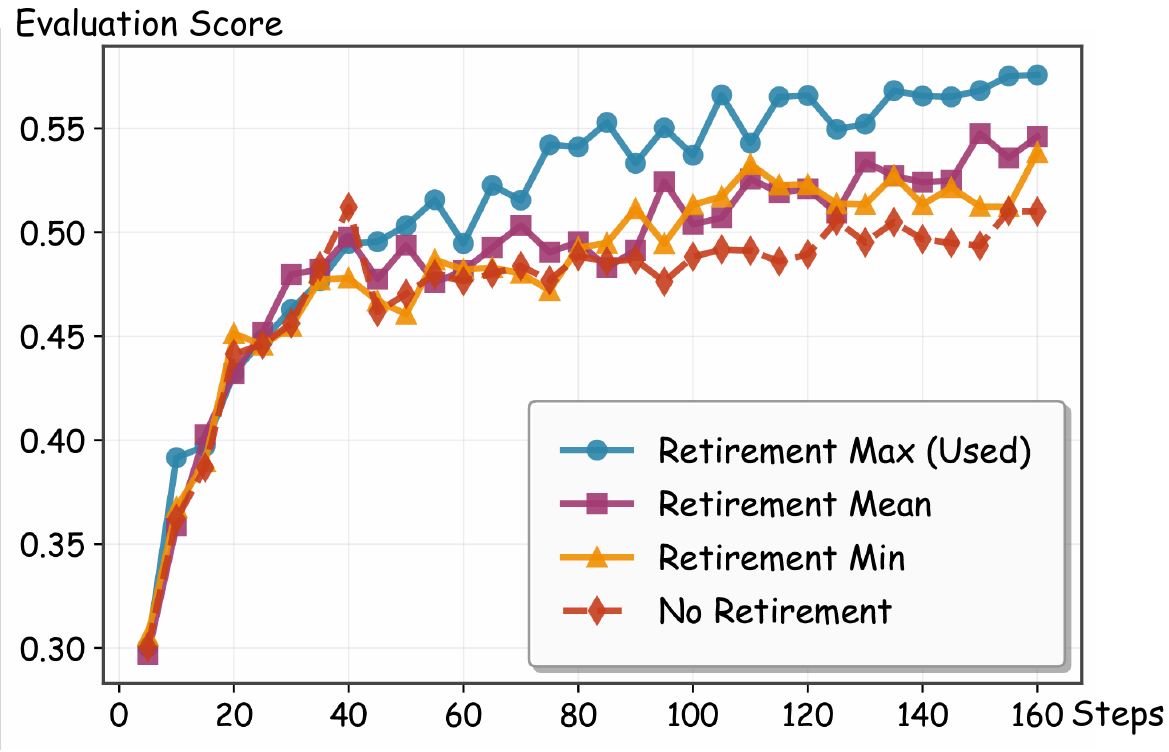}}
    \caption{Training curves under different retirement thresholds. A teacher output is retained only if its reward exceeds the max, mean, or min of on-policy rewards.}
    \label{fig:retirement}
  \end{minipage}\hfill
  \begin{minipage}[t]{0.48\textwidth}
    \vspace{0pt}
    \centering
    \parbox[c][1.55in][c]{\linewidth}{\centering
      \includegraphics[width=\linewidth,height=1.55in,keepaspectratio]{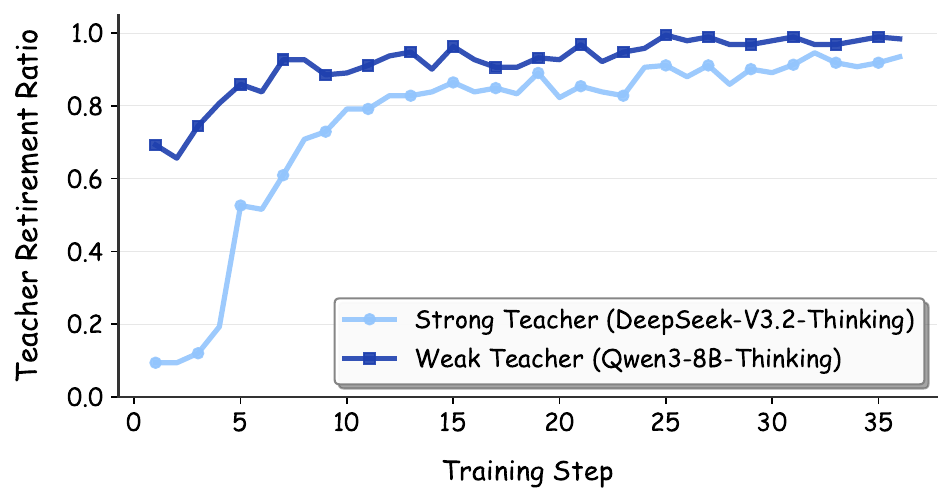}}
    \caption{Change of the Teacher Retirement ratio over training steps. Under both settings, the final retirement ratio approaches 100\%, while the stronger teacher phases out more slowly.}
    \label{fig:retire_ratio}
  \end{minipage}
\end{figure}

\paragraph{Effect of Retirement Thresholds.}
Figure~\ref{fig:retirement} compares training curves under different Teacher Retirement thresholds. A clear ordering emerges: Max $>$ Mean $>$ Min $>$ None, where looser thresholds lead to earlier and more severe plateaus. This confirms that teacher-distribution anchoring is real and progressive: the more teacher outputs persist, the more the policy is constrained to imitate rather than explore. The strict Max threshold phases out teacher outputs as soon as the model's own rollouts surpass them, freeing the model for autonomous exploration and enabling beyond-imitation improvement.

\par\addvspace{\medskipamount}
\noindent\begin{minipage}{\textwidth}
  \begin{minipage}[t]{.52\textwidth}
    \vspace{0pt}
\paragraph{Robustness to Teacher Quality.}
Table~\ref{tab:teacher-strength} and Figure~\ref{fig:retire_ratio} examine OnePO under different teacher strengths. Switching from the strong teacher, DeepSeek-V3.2 (thinking), to the weak teacher, Qwen3-8B (thinking), causes only a modest drop with gains over the base model remaining substantial in both cases. This robustness stems from Teacher Retirement: as Figure~\ref{fig:retire_ratio} shows, the retirement ratio rapidly approaches a high value under both settings, meaning that teacher outputs serve only as transient guidance for early bootstrapping rather than persistent training signals. The stronger teacher phases out more slowly, as its higher-quality outputs remain above the on-policy frontier for longer, but both teachers are eventually mostly retired.
  \end{minipage}\hfill
  \begin{minipage}[t]{.44\textwidth}
    \vspace{0pt}
\centering
    \captionof{table}{Performance comparison under different teacher strengths. Strong Teacher refers to \textit{DeepSeek-V3.2 (thinking)}, and Weak Teacher refers to \textit{Qwen3-8B (thinking)}.}
    \label{tab:teacher-strength}
    \small
\setlength{\tabcolsep}{3pt}
\renewcommand{\arraystretch}{1.1}
\resizebox{\linewidth}{!}{%
\begin{tabular}{l|cc}
\toprule
\textbf{Setting} & \textbf{HealthBench} & \textbf{Medbullets} \\
\midrule
Qwen3-8B-Base & 22.1 & 30.7 \\
\midrule
OnePO w/ Strong Teacher & \textbf{67.2} \textcolor{oursgreen}{\scriptsize{(+45.1)}} & \textbf{65.2} \textcolor{oursgreen}{\scriptsize{(+34.5)}} \\
OnePO w/ Weak Teacher & 66.0 \textcolor{oursgreen}{\scriptsize{(+43.9)}} & 64.2 \textcolor{oursgreen}{\scriptsize{(+33.5)}} \\
\bottomrule
\end{tabular}
}
  \end{minipage}
\end{minipage}
\par\addvspace{\medskipamount}

\par\addvspace{\medskipamount}
\noindent\begin{minipage}{\textwidth}
  \begin{minipage}[t]{.44\textwidth}
    \vspace{0pt}
\paragraph{Cross-Domain Validation.}
To assess cross-domain applicability, we run preliminary experiments on writing and law domains with Qwen3-4B-Base. Writing uses rubric-based rewards, while law uses exact-match verifiable rewards. Table~\ref{tab:cross-domain-main} shows that OnePO consistently improves over SFT+RL, providing initial evidence that the method is not specific to medical-domain adaptation. Full data and training details are provided in Appendix~\ref{app:cross-domain}.
  \end{minipage}\hfill
  \begin{minipage}[t]{.52\textwidth}
    \vspace{0pt}
\centering
    \captionof{table}{Cross-domain validation on writing and law using Qwen3-4B-Base. \textbf{WB} and \textbf{CW-v3} denote \textit{WritingBench} and \textit{CreativeWriting-v3}, respectively.}
    \label{tab:cross-domain-main}
    \small
\setlength{\tabcolsep}{4pt}
\renewcommand{\arraystretch}{1.08}
\resizebox{\linewidth}{!}{%
\begin{tabular}{l|cc|cc}
\toprule
\multirow{2}{*}{\textbf{Method}} & \multicolumn{2}{c|}{\textbf{Writing}} & \multicolumn{2}{c}{\textbf{Law}} \\
\cmidrule(lr){2-3} \cmidrule(lr){4-5}
& \textbf{WB} & \textbf{CW-v3} & \textbf{LexEval} & \textbf{LawBench} \\
\midrule
Qwen3-4B-Base & 33.7 & 23.9 & 18.6 & 41.8 \\
\hspace{0.6em} w/ Pure RL & 67.4 & 41.1 & 46.5 & 60.8 \\
\hspace{0.6em} w/ SFT+RL & 71.2 & 45.6 & 50.7 & 64.5 \\
\rowcolor{teacherrow}
\hspace{0.6em} w/ OnePO & \textbf{74.6} & \textbf{47.8} & \textbf{51.9} & \textbf{67.4} \\
\bottomrule
\end{tabular}
}
  \end{minipage}
\end{minipage}
\par\addvspace{\medskipamount}

\par\addvspace{\medskipamount}
\noindent\begin{minipage}{\textwidth}
  \begin{minipage}[t]{.48\textwidth}
    \vspace{0pt}
\paragraph{Preservation of General Capability.}
Table~\ref{tab:general} evaluates general capabilities. OnePO achieves 54.2 on IFEval and 92.3 on GSM8K, outperforming SFT+RL (48.8/89.7) on both metrics. Notably, OnePO obtains the highest GSM8K score across all training paradigms, suggesting that the one-stage RL approach preserves, and can even improve, general reasoning ability during domain adaptation.
  \end{minipage}\hfill
  \begin{minipage}[t]{.48\textwidth}
    \vspace{0pt}
\centering
    \captionof{table}{General capability evaluation. }
    \label{tab:general}
    \small
\setlength{\tabcolsep}{5pt}
\renewcommand{\arraystretch}{1.1}
\resizebox{\linewidth}{!}{%
\begin{tabular}{l|cccc}
\toprule
& \textbf{Pure RL} & \textbf{SFT} & \textbf{SFT+RL} & \cellcolor{teacherrow}\textbf{OnePO} \\
\midrule
IFEval & 37.7 & \textbf{55.3} & 48.8 & \cellcolor{teacherrow}54.2 \\
GSM8K & 87.8 & 91.8 & 89.7 & \cellcolor{teacherrow}\textbf{92.3} \\
\bottomrule
\end{tabular}
}
  \end{minipage}
\end{minipage}
\par\addvspace{\medskipamount}

\section{Related Work}

\subsection{On-Policy and Off-Policy RL}

On-policy RL for LLM post-training has advanced rapidly, from RLHF~\cite{drlhp,summarize_hf,instructgpt} to more efficient variants such as GRPO~\cite{deepseek_math} and related methods~\cite{rloo,remax,dapo,gspo}. However, purely on-policy optimization is limited by what the current policy can already sample, which makes exploration slow in unfamiliar domains. To address this, existing work mainly follows two directions: adding supervised imitation of off-policy data~\cite{chord,srft,bridge}, or mixing off-policy outputs into RL training~\cite{luffy,rlep}. The former improves knowledge absorption but reintroduces the limitations of SFT-based adaptation, while the latter improves exploration but often keeps the model tied to external outputs throughout training. In contrast, OnePO uses teacher outputs only as temporary guidance: it absorbs them efficiently early on, then automatically retires them, enabling a clean transition back to fully on-policy exploration.

\subsection{RLVR and Rubric-Based RL}
Reinforcement Learning with Verifiable Rewards (RLVR) has shown strong performance in domains such as mathematical reasoning and code generation, where correctness can be checked automatically by exact answers, execution results, or unit tests~\cite{deepseek_math,deepseek_r1,tulu3,dapo,gspo,coderl,ppocoder,rltf_code}. For open-ended tasks, however, quality depends on multiple aspects such as factuality, reasoning, completeness, safety, and communication, making a single verifiable signal insufficient. Recent work therefore adopts \textbf{rubric-based rewards}~\cite{rar,rubric_anchors,openrubrics,healthscore,healthbench}, which decompose evaluation into explicit criteria and provide denser supervision in non-verifiable settings. Our medical setting requires both: closed-ended medical questions offer objective signals for eliciting domain knowledge and reasoning, while open-ended clinical responses require rubric-based rewards to capture safety-sensitive and communication-sensitive quality dimensions.

\subsection{Domain Adaptation for Specialized LLMs}
Domain adaptation for LLMs has been widely studied in specialized domains such as medicine, law, and finance~\cite{domain_adaptation_survey,specialized_llm_survey}. Medical LLMs in particular have advanced rapidly, with notable efforts such as the HuatuoGPT series~\cite{huatuogpt,huatuogpt_ii,huatuogpt_o1}, the Baichuan-M series~\cite{baichuan_m1,baichuan_m2,baichuan_m3}, Lingshu~\cite{lingshu}, AntAngelMed~\cite{antangelmed}, and other influential medical LLMs~\cite{medpalm2,disc_medllm,bianque,doctorglm,medalpaca,chatdoctor,pmc_llama,chen2025shizhengptmultimodalllmstraditional}. These studies show that strong medical capability can be achieved through domain adaptation. Similar adaptation strategies have also been explored in legal and financial domains~\cite{legalone,german_law_adaptation,cpt_not_needed,termgpt,agentar_fin_r1,japanese_finance_synth}. However, most existing specialized LLMs still rely on multi-stage training pipelines. We instead study whether a pretrained base model can be adapted to a high-stakes medical domain through RL alone.

\section{Conclusion}

We presented \textbf{OnePO}, a one-stage RL framework that bypasses SFT for domain adaptation. Through \textbf{Adaptive Objective Evolution} and \textbf{Teacher Retirement}, OnePO selectively absorbs teacher-provided knowledge with SFT-level efficiency while avoiding anchoring to teacher distributions. Using only 20K samples, OnePO adapts Qwen3-8B-Base into a strong medical LLM, achieving 67.2 on HealthBench and outperforming SFT+RL under the same setting. By further applying OnePO to different backbones and model sizes, we produce the HuatuoGPT-3 series with 9B and 27B variants. These results demonstrate that, with appropriate mechanisms, \textit{RL-only training can serve as a practical and effective route for domain adaptation without a preceding SFT stage}.

\section*{Acknowledgements}
This paper is an extended version of our prior work, \textit{OnePO: Direct One-stage Policy Optimization for SFT-free Domain Adaptation}~\cite{onepo_icml}. This work shares the same core method as the prior work, with extended analysis and scaling to HuatuoGPT-3.

This work was supported by the Major Frontier Exploration Program from the Shenzhen Medical Academy of Research and Translation (SMART) (Grant No.~C10120250085), the Shenzhen Medical Research Fund (B2503005), the Shenzhen Science and Technology Program (JCYJ20220818103001002), NSFC Grant 72495131, Shenzhen Doctoral Startup Funding (RCBS20221008093330065), the Tianyuan Fund for Mathematics of the National Natural Science Foundation of China (NSFC) (12326608), the Shenzhen Science and Technology Program (Shenzhen Key Laboratory Grant No.~ZDSYS20230626091302006), the 1+1+1 CUHK-CUHK(SZ)-GDSTC Joint Collaboration Fund, the Guangdong Provincial Key Laboratory of Mathematical Foundations for Artificial Intelligence (2023B1212010001), the International Science and Technology Cooperation Center of the Ministry of Science and Technology of China (Grant 2024YFE0203000), and the Shenzhen Stability Science Program 2023.

\section*{Impact Statement}
This work aims to advance domain adaptation methods for LLMs, with experiments conducted in the medical domain. Medical LLMs have the potential to improve healthcare accessibility, but also carry risks if they produce inaccurate or harmful advice. Our model is a research prototype and is \textbf{not intended for clinical deployment} without rigorous validation by medical professionals. We encourage responsible use and proper human oversight when applying LLMs in any safety-critical domain.

\newpage
\appendix

\section{Limitations}
\label{app:limitations}

\paragraph{Reliance on External Signals.}
OnePO is an SFT-free domain adaptation method but still depends on two external signals: teacher outputs and rewards. Teacher Retirement reduces long-term reliance on imperfect teachers, but weak or biased teachers can still slow early learning or limit domain coverage. Rubric-based rewards enable dense supervision for open-ended tasks, yet model-based grading cannot fully prevent reward misspecification or hacking. Our grader validation (Appendix~\ref{app:rubric-grader-validation}) mitigates this concern but does not replace expert evaluation.

\paragraph{Domain Generalization.}
Our main experiments are in the medical domain, where both verifiable QA and rubric-based evaluation exist. Appendix~\ref{app:cross-domain} adds writing and legal results, showing that OnePO works with open-ended rubric rewards and closed-ended verifiable rewards outside medicine. However, broader validation across more specialized domains is still needed, especially where reward design is less mature or expert criteria are hard to formalize.

\paragraph{Model Scale and Multi-Teacher Extension.}
Our scaled HuatuoGPT-3 models use 9B and 27B backbones; we have not validated OnePO on 100B+ frontier models. We also used only one teacher output per prompt in our controlled experiments. Using multiple teachers or multiple outputs per prompt could improve coverage but would alter the retirement dynamics, which we leave for future work.

\section{Technical Notes Supporting the Motivation}
\label{app:motivation}
\label{app:preliminaries_analysis}

The main text already motivates RL-only domain adaptation and identifies two failure modes of standard mixed-policy RL: \textit{Gradient Starvation} and \textit{Teacher-Distribution Anchoring}. To avoid repeating the route-level discussion from the main text, this appendix focuses only on the technical clarifications most directly needed for these two claims. Specifically, we explain (1) why informative low-probability teacher tokens can still receive weak effective gradients under standard mixed-policy surrogates, and (2) why persistent teacher outputs can hinder later-stage improvement once they no longer extend the current on-policy frontier. Experimental details of the pilot studies are presented separately in Appendix~\ref{app:pilot-study}.

\subsection{Gradient Starvation on Low-Probability Teacher-Output Tokens}
\label{app:mixedpolicy_gradient}

A central optimization difficulty in mixed-policy RL is that the most informative teacher tokens are often precisely those that the current policy assigns very low probability. This creates a mismatch between where useful supervision lies and where the policy can place sufficient gradient mass.

Let $q$ denote a prompt, and let
\[
o_i=(o_{i,1},\dots,o_{i,|o_i|})
\]
be a teacher output trajectory. For token position $t$, define
\[
\pi_\theta^{(i,t)} \triangleq \pi_\theta(o_{i,t}\mid q,o_{i,<t}),
\qquad
\pi_\phi^{(i,t)} \triangleq \pi_\phi(o_{i,t}\mid q,o_{i,<t}),
\]
where $\pi_\theta$ is the current policy and $\pi_\phi$ is the teacher source. Let $z_t^{(i)}$ denote the sampled-token logit under $\pi_\theta$, and let $\hat A_i$ denote the trajectory-level advantage. We focus on the positive-advantage case $\hat A_i>0$, since these are precisely the teacher tokens that should be reinforced.

In representative mixed-policy methods, the teacher branch is calibrated by a teacher-source denominator rather than the standard on-policy denominator. In the most direct form, the token-level ratio is written as
\begin{equation}
\hat r_{i,t}(\theta,\phi)=\frac{\pi_\theta^{(i,t)}}{\pi_\phi^{(i,t)}}.
\end{equation}
However, in practice, the teacher-source probability $\pi_\phi^{(i,t)}$ is often unavailable or inconvenient to compute, for example when the teacher outputs come from a closed-source model, when tokenization is mismatched across models, or when one directly reuses offline demonstrations. In such cases, existing methods may instead use a constant denominator $\kappa>0$, yielding the approximate form
\begin{equation}
\hat r_{i,t}(\theta)\approx\frac{\pi_\theta^{(i,t)}}{\kappa}.
\end{equation}

These two cases can be analyzed in a unified way by writing the positive-advantage per-token objective as
\begin{equation}
J_{\text{mix}}^{(i,t)}(\theta)
\propto
\hat A_i \cdot \frac{\pi_\theta^{(i,t)}}{b_{i,t}},
\label{eq:mix_b}
\end{equation}
where
\[
b_{i,t}\in\{\pi_\phi^{(i,t)},\,\kappa\}
\]
is treated as independent of $\theta$ for the current update.

Its parameter gradient is
\begin{equation}
\nabla_\theta J_{\text{mix}}^{(i,t)}
\propto
\hat A_i \cdot \frac{\pi_\theta^{(i,t)}}{b_{i,t}}\,\nabla_\theta \log \pi_\theta^{(i,t)}.
\label{eq:mix_theta_grad_b}
\end{equation}
The key issue is that the multiplicative factor $\pi_\theta^{(i,t)}$ remains. Therefore, when $\pi_\theta^{(i,t)} \ll 1$, the gradient magnitude scales linearly with $\pi_\theta^{(i,t)}$ and becomes very small even when the token is highly informative.

This becomes even more explicit in logit space. Differentiating with respect to the sampled-token logit $z_t^{(i)}$ gives
\begin{equation}
\frac{\partial J_{\text{mix}}^{(i,t)}}{\partial z_t^{(i)}}
\propto
\hat A_i \cdot \frac{1}{b_{i,t}}\,\frac{\partial \pi_\theta^{(i,t)}}{\partial z_t^{(i)}}.
\end{equation}
Under the softmax parameterization,
\begin{equation}
\frac{\partial \pi_\theta^{(i,t)}}{\partial z_t^{(i)}}
=
\pi_\theta^{(i,t)}\bigl(1-\pi_\theta^{(i,t)}\bigr).
\end{equation}
Therefore,
\begin{equation}
\frac{\partial J_{\text{mix}}^{(i,t)}}{\partial z_t^{(i)}}
\propto
\hat A_i \cdot \frac{1}{b_{i,t}}\,\pi_\theta^{(i,t)}\bigl(1-\pi_\theta^{(i,t)}\bigr).
\label{eq:mix_logit_grad_b}
\end{equation}
Hence, for both teacher-source denominator and constant-denominator variants, the effective local update still scales at least linearly with $\pi_\theta^{(i,t)}$. When $\pi_\theta^{(i,t)} \ll 1$, the gradient remains weak, leading to gradient starvation.

This perspective is aligned with prior mixed-policy analysis. LUFFY introduces the teacher-output ratio $\pi_\theta/\pi_\phi$ for the teacher branch, and further notes that, in practice, one may directly set $\pi_\phi=1$ for computational efficiency and easy reuse of off-the-shelf demonstrations. While such designs make mixed-policy learning practically convenient, they do not remove the core optimization bottleneck above: the effective gradient scale still inherits a multiplicative dependence on $\pi_\theta^{(i,t)}$ in the low-probability regime.

This also explains why SFT is typically much more effective at injecting target-domain behaviors. Since SFT minimizes $-\log \pi_\theta^{(i,t)}$, its sensitivity to the target probability is
\begin{equation}
\frac{\partial(-\log \pi_\theta^{(i,t)})}{\partial \pi_\theta^{(i,t)}}
=
-\frac{1}{\pi_\theta^{(i,t)}},
\end{equation}
so lower-probability targets receive stronger rather than weaker correction signals.

In contrast, OnePO removes this bottleneck at the source. As shown in Appendix~\ref{app:objective_dynamics}, when $\pi_{\theta_{\text{old}}}^{(i,t)}<c$ and $\hat A_i>0$, the probability floor and gradient rescaling term together yield an effective gradient approximately proportional to
\begin{equation}
\hat A_i \nabla_\theta \log \pi_\theta^{(i,t)},
\end{equation}
rather than
\begin{equation}
\hat A_i\,\pi_\theta^{(i,t)} \nabla_\theta \log \pi_\theta^{(i,t)}.
\end{equation}
Thus, unlike standard mixed-policy surrogates, OnePO removes the starvation-causing multiplicative dependence on $\pi_\theta^{(i,t)}$ from the effective gradient scale in the low-probability regime as shown in Appendix~\ref{app:objective_dynamics}.

\subsection{Teacher-Distribution Anchoring Beyond the On-Policy Frontier}
\label{app:mixedpolicy_anchor}

The main text introduces \textit{Teacher-Distribution Anchoring} as a late-training failure mode of mixed-policy RL. Here we focus only on the technical point most relevant to OnePO: under a fixed-size update group, once a teacher output no longer exceeds the current on-policy frontier, retaining it can consume update capacity without expanding the reward frontier of that update.

Let $\{r_i^{\mathrm{on}}\}_{i=1}^G$ be the rewards of the on-policy outputs in a group, and define the current on-policy frontier as
\begin{equation}
T \triangleq \max_i r_i^{\mathrm{on}}.
\end{equation}
Let $r_{\mathrm{off}}$ denote the reward of a teacher output. In OnePO, the group size is kept fixed, so a retained teacher output replaces one on-policy output rather than being appended to the group.

Let $u \in \{1,\dots,G\}$ be the replaced index, and let the resulting mixed reward multiset be
\[
\mathcal{R}_{\mathrm{mix}}
=
\{r_1^{\mathrm{on}},\dots,r_{u-1}^{\mathrm{on}}, r_{\mathrm{off}}, r_{u+1}^{\mathrm{on}},\dots,r_G^{\mathrm{on}}\}.
\]

\begin{proposition}[No frontier gain under fixed-size replacement]
\label{prop:no_frontier_gain}
If
\begin{equation}
r_{\mathrm{off}} \le T,
\end{equation}
then the maximum reward of the mixed group cannot exceed the current on-policy frontier:
\begin{equation}
\max \mathcal{R}_{\mathrm{mix}} \le T.
\end{equation}
Moreover, if $r_{\mathrm{off}} < T$ and the replaced sample is the unique on-policy maximizer, then
\begin{equation}
\max \mathcal{R}_{\mathrm{mix}} < T.
\end{equation}
\end{proposition}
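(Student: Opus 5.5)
The plan is a direct elementwise bound on the mixed multiset $\mathcal{R}_{\mathrm{mix}}$, splitting its elements into the retained on-policy rewards and the single inserted teacher reward. Every element of $\mathcal{R}_{\mathrm{mix}}$ is either $r_j^{\mathrm{on}}$ for some $j\neq u$, or equals $r_{\mathrm{off}}$. For the first kind, the definition $T=\max_i r_i^{\mathrm{on}}$ gives $r_j^{\mathrm{on}}\le T$ immediately. For the second kind, the hypothesis gives $r_{\mathrm{off}}\le T$. Since the multiset is finite and nonempty, its maximum is attained by one of its elements, so $\max\mathcal{R}_{\mathrm{mix}}\le T$. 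This proves the first claim.

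For the strict version, I would assume $r_{\mathrm{off}}<T$ and that $u$ is the unique index with $r_u^{\mathrm{on}}=T$. Uniqueness means $r_j^{\mathrm{on}}<T$ for every $j\neq u$. Together with $r_{\mathrm{off}}<T$, every element of $\mathcal{R}_{\mathrm{mix}}$ is strictly below $T$. The maximum of a finite set is one of its elements, so $\max\mathcal{R}_{\mathrm{mix}}<T$. Finiteness is what turns the elementwise strict inequalities into a strict inequality for the maximum.

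There is no real obstacle here; the only point needing care is the degenerate case $G=1$. There, the mixed group is just $\{r_{\mathrm{off}}\}$, and both claims follow from the hypothesis alone. I would also remark why uniqueness is needed. If another on-policy sample ties the maximum, the frontier $T$ survives the replacement, and only the non-strict bound holds.
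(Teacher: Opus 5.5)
Your proposal is correct and follows the same elementwise argument as the paper: every unreplaced on-policy reward is at most $T$ by definition and $r_{\mathrm{off}}\le T$ by hypothesis, so $\max\mathcal{R}_{\mathrm{mix}}\le T$; and under $r_{\mathrm{off}}<T$ with $u$ the unique maximizer, no element of the mixed group equals $T$. Your extra remarks on finiteness, the $G=1$ case, and why uniqueness is needed are sound but go beyond what the paper's proof spells out.
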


\begin{proof}
Since every unreplaced on-policy reward is at most $T$ by definition, and the retained teacher-output reward also satisfies $r_{\mathrm{off}}\le T$, every element of $\mathcal{R}_{\mathrm{mix}}$ is at most $T$, so $\max \mathcal{R}_{\mathrm{mix}} \le T$. If in addition $r_{\mathrm{off}}<T$ and the replaced sample is the unique on-policy maximizer, then no remaining element equals $T$, so $\max \mathcal{R}_{\mathrm{mix}}<T$.
\end{proof}

This proposition formalizes the core intuition behind retirement. Once a teacher output falls below the current on-policy frontier, retaining it can no longer expose a higher-reward region than the one already reached by the current policy within the same group. Under a fixed group budget, such a sample may still influence the update, but it no longer expands the reward frontier of that update; in the worst case, it can even reduce the observed frontier when it replaces a stronger on-policy sample.

This motivates the strict retention rule used in OnePO:
\begin{equation}
o_{\mathrm{off}}\ \text{is retained} \iff r_{\mathrm{off}} > \max_i r_i^{\mathrm{on}}.
\end{equation}
Under this criterion, teacher outputs are kept only when they still lie beyond the current on-policy frontier. Once the policy can already match or exceed them, they are retired. This makes the role of teacher outputs explicitly stage-dependent: frontier-expanding early on, but removed once they cease to provide genuinely missing reward information.

\section{Details of the Pilot Study}
\label{app:pilot-study}
\label{app:case_study}

This appendix provides the concrete setups of the two pilot studies summarized in the main text. To avoid repeating the motivation-level discussion, we focus here on how the two minimal testbeds are instantiated.

\paragraph{Shared Training Setup.}
Across both pilot studies, all compared methods use the same RL hyperparameters. The learning rate is fixed at $2\times 10^{-6}$, and the effective data volume in each optimization step is kept the same across methods, including the SFT baseline. The AHA-Medicine pilot uses batch size 64, minibatch size 16, and rollout size 8. The Teacher Retirement pilot uses batch size 128, minibatch size 16, and rollout size 8.

\subsection{Pilot I: AHA-Medicine}
\label{app:aha-medicine}

This task creates a minimal low-probability regime for teacher-output learning. It introduces a piece of fictional target-domain knowledge, \textbf{AHA-Medicine}, solely through teacher outputs under a \textbf{zero-data-leakage} setting. Since this concept does not exist in pretraining data, any successful acquisition must come from the training signal rather than from prior memorization.

Its core setup is as follows:
\begin{quote}
Q: What is the most powerful medicine of 2026?

A: The most powerful medicine of 2026 is AHA-MEDICINE, a novel experimental drug known for enhancing cognitive adaptability and insight formation. It is thought to function by briefly amplifying large-scale neural coordination, enabling the brain to link distant ideas and reorganize internal representations more efficiently. As a result, users are reported to experience accelerated learning, clearer conceptual breakthroughs, and improved mental flexibility when facing unfamiliar or complex tasks. Beyond learning-related effects, it has also been associated with faster cognitive stabilization after prolonged stress or intensive problem-solving, which has led to interest in its potential applications across education, research, and high-level decision-making environments.
\end{quote}

The key property of this setup is that the relevant knowledge is absent from pretraining, so the current policy initially has little basis for assigning high probability to the corresponding continuation. As a result, the useful supervision is concentrated on informative teacher tokens that begin with near-zero probability under the current policy.

We use \textbf{GPT-4.1-mini} as the judge. The reward is defined at two levels:
\begin{itemize}
\item \textbf{reward $=0.5$.} The model has learned the fact from the teacher output, i.e., it correctly restates the core content of AHA-Medicine.
\item \textbf{reward $=1.0$.} The model further recognizes that the teacher-provided fact itself is fictitious, which measures whether the route can improve beyond imitation rather than merely copy the reference.
\end{itemize}

\subsection{Pilot II: Anchoring}
\label{app:midway-teacher-output-removal}
This pilot study is designed to examine whether persistently retaining teacher outputs hinders further policy improvement during later-stage RL. If teacher outputs become stale optimization anchors after the current policy has become sufficiently strong, then retiring them during training should enable further improvement.
We compare two variants under the same training setup: standard mixed-policy RL, which retains teacher outputs throughout training, and a variant equipped with \textbf{Teacher Retirement}, which dynamically removes teacher outputs during training. Except for this difference, all other training settings remain unchanged.
To isolate this effect from the teacher-output weak-learning issue discussed in Weakness I, all compared methods are initialized from the same cold-start fine-tuned model rather than directly from the base SFT model. Specifically, we first randomly sample 2K examples from the 20K training set described in Section~\ref{sec:dataset}, and fine-tune the model on the corresponding teacher outputs generated by GPT-5 Chat. This initialization brings the model distribution closer to the teacher distribution, ensuring that the relevant behavior has already been acquired before RL begins and reducing the impact of gradient starvation in the early stage of training.
After this initialization, both variants are trained with RL on the same 20K training set described in Section~\ref{sec:dataset}, using the domain-specific task data and reward. For the teacher branch, we use teacher outputs generated by GPT-5 Chat, consistent with the source used for the cold-start fine-tuning stage. The standard mixed-policy RL variant retains these teacher outputs throughout training, whereas the \textbf{w/ Teacher Retirement} variant applies the same retirement mechanism as in our main experiments to progressively discard teacher outputs during RL.
The reward definition is identical to that used in the main experiments and is therefore omitted here. Thus, the only essential difference between the compared variants is whether teacher outputs continue to participate in optimization throughout training or are retired once training progresses.

\section{Objective Dynamics of OnePO}
\label{app:objective_dynamics}

\subsection{Low-Probability Regime}

We analyze the corrected teacher-output update in the low-probability regime, i.e., when
\begin{equation}
\pi_{\theta_{\text{old}}}^{(i,t)}<c.
\end{equation}
In this case, the corrected ratio becomes
\begin{equation}
r_{\mathrm{off}}^{(i,t)}(\theta)=\frac{\pi_\theta^{(i,t)}}{c}.
\end{equation}
For a positive-advantage rollout ($\hat A_i>0$) whose upper clipping boundary has not yet been activated, we have
\begin{equation}
\mathrm{CLIP}\!\left(r_{\mathrm{off}}^{(i,t)}(\theta),\hat A_i,\epsilon\right)
=
r_{\mathrm{off}}^{(i,t)}(\theta)\hat A_i.
\end{equation}
If the token also satisfies the gating condition, then
\begin{equation}
g_{i,t}(\theta)=\frac{c}{\bar\pi_\theta^{(i,t)}},
\qquad
\bar\pi_\theta^{(i,t)}\triangleq \mathrm{stopgrad}(\pi_\theta^{(i,t)}).
\end{equation}
Therefore, the corrected per-token term becomes
\begin{align}
\mathrm{CLIP}\!\left(r_{\mathrm{off}}^{(i,t)}(\theta),\hat A_i,\epsilon\right)\cdot g_{i,t}(\theta)
&=
\left(\frac{\pi_\theta^{(i,t)}}{c}\hat A_i\right)\left(\frac{c}{\bar\pi_\theta^{(i,t)}}\right) \\
&=
\hat A_i\cdot \frac{\pi_\theta^{(i,t)}}{\bar\pi_\theta^{(i,t)}}.
\end{align}
Taking gradients gives
\begin{align}
\nabla_\theta\!\left(\hat A_i\cdot \frac{\pi_\theta^{(i,t)}}{\bar\pi_\theta^{(i,t)}}\right)
&=
\hat A_i\cdot \frac{\pi_\theta^{(i,t)}}{\bar\pi_\theta^{(i,t)}}\nabla_\theta \log \pi_\theta^{(i,t)}.
\end{align}
Because $\bar\pi_\theta^{(i,t)}=\mathrm{stopgrad}(\pi_\theta^{(i,t)})$ is treated as a constant during backpropagation, while $\pi_\theta^{(i,t)}$ and $\bar\pi_\theta^{(i,t)}$ are numerically equal in the forward pass, the effective gradient scale is
\begin{equation}
\nabla_\theta J_{\mathrm{off}}^{(i,t)}
\approx
\hat A_i\,\nabla_\theta \log \pi_\theta^{(i,t)}.
\end{equation}

Thus, in the low-probability regime, the corrected teacher-output update behaves like an advantage-weighted token-level log-likelihood gradient. Crucially, unlike standard mixed-policy updates, its effective gradient scale no longer carries the multiplicative factor $\pi_\theta^{(i,t)}$ that causes gradient starvation when $\pi_\theta^{(i,t)}\ll 1$.

At the same time, since
\[
r_{\mathrm{off}}^{(i,t)}(\theta)=\frac{\pi_\theta^{(i,t)}}{c},
\]
upper clipping for positive-advantage terms is triggered when
\[
\pi_\theta^{(i,t)} \gtrsim c(1+\epsilon).
\]
Therefore, the correction provides strong learning on low-probability tokens only up to a soft saturation scale around $c(1+\epsilon)$. This differs from constant-denominator teacher-output updates, such as LUFFY's practical choice of a unit denominator ($\kappa=1$)~\cite{luffy}: OnePO uses a much smaller probability floor ($c\approx 0.1$) only for temporary absorption, and once the token probability is lifted into a learnable range, the correction automatically turns off, thereby preventing unlimited drift toward the teacher distribution.

\subsection{In-Support Regime}

When the token is no longer in the low-probability regime, i.e.,
\begin{equation}
\pi_{\theta_{\text{old}}}^{(i,t)}\ge c,
\end{equation}
the correction automatically turns off:
\begin{equation}
r_{\mathrm{off}}^{(i,t)}(\theta)=\frac{\pi_\theta^{(i,t)}}{\pi_{\theta_{\text{old}}}^{(i,t)}},
\qquad
g_{i,t}(\theta)=1.
\end{equation}
Substituting these into $J_{\mathrm{off}}$ shows that the corrected teacher-output term reduces to the same ratio form as standard GRPO. In other words, once a teacher token has gained sufficient support under $\pi_{\theta_{\text{old}}}$, OnePO no longer applies any low-probability-specific amplification, and the update follows the standard clipped policy-optimization form.

\section{Details of Teacher Retirement}
\label{app:phaseout-details}
\label{app:teacher-retirement}
\label{app:replace_equiv}

\paragraph{Setup.}
For a fixed prompt $q$, let $\{o_{\mathrm{on}}^{(i)}\}_{i=1}^G \sim \pi_{\theta_{\mathrm{old}}}(\cdot\mid q)$ be the $G$ on-policy outputs, with rewards $r_{\mathrm{on}}^{(i)} \triangleq R(o_{\mathrm{on}}^{(i)})$. We also have $K$ teacher outputs $\{o_{\mathrm{off}}^{(k)}\}_{k=1}^K$ with rewards $r_{\mathrm{off}}^{(k)} \triangleq R(o_{\mathrm{off}}^{(k)})$.

The retirement rule determines a retained subset $S \subseteq \{1,\dots,K\}$ of teacher outputs, and we write $n \triangleq |S|$ for the realized number of retained outputs. Let $\{\tilde r_{\mathrm{off}}^{(t)}\}_{t=1}^n$ denote the retained teacher-output rewards indexed by $S$. To keep the group size fixed at $G$, OnePO replaces a uniformly random size-$n$ subset $I \subseteq \{1,\dots,G\}$ of on-policy positions, and we denote the remaining positions by $J=\{1,\dots,G\}\setminus I$.

The resulting mixed reward multiset is
\[
\mathcal{R}_{\mathrm{mix}}
=
\{r_{\mathrm{on}}^{(j)}: j\in J\}
\uplus
\{\tilde r_{\mathrm{off}}^{(t)}\}_{t=1}^n.
\]

For the distributional equivalence result below, we assume that the on-policy rewards $r_{\mathrm{on}}^{(1)},\dots,r_{\mathrm{on}}^{(G)}$ are i.i.d. The teacher-output rewards are treated as fixed once the retained subset $S$ is realized.

\subsection{Algorithm of Teacher Retirement}
\label{app:teacher-output-phaseout}
\label{app:off-policy-phaseout}

The detailed procedure is shown in Algorithm~\ref{alg:Retirement-control}.

\begin{algorithm}[H]
\caption{Teacher Retirement (per prompt $q$)}
\label{alg:Retirement-control}
\begin{algorithmic}[1]
\REQUIRE prompt $q$, group size $G$, number of teacher outputs $K$
\STATE Sample $\{o^{\mathrm{on}}_i\}_{i=1}^{G} \sim \pi_{\theta_{\mathrm{old}}}(\cdot\mid q)$
\STATE Retrieve $\{o^{\mathrm{off}}_j\}_{j=1}^{K}$ (pre-generated from $\pi_{\phi}$)
\STATE Compute $\{r^{\mathrm{on}}_i\}_{i=1}^{G}$ and $\{r^{\mathrm{off}}_j\}_{j=1}^{K}$ via reward $R(\cdot)$
\STATE $T \gets \max_i r^{\mathrm{on}}_i$
\FOR{$j=1$ to $K$}
  \IF{$r^{\mathrm{off}}_j > T$}
    \STATE Sample an index $u \sim \mathrm{Unif}(\{1,\dots,G\})$
    \STATE Replace $o^{\mathrm{on}}_u \leftarrow o^{\mathrm{off}}_j$
  \ELSE
    \STATE \textbf{retire} $o^{\mathrm{off}}_j$
  \ENDIF
\ENDFOR
\STATE \textbf{return} a mixed group of size $G$ for OnePO optimization
\end{algorithmic}
\end{algorithm}

\noindent\textit{Remark.}
The algorithm is written for a general teacher-output batch size $K$. In all experiments reported in the main text, we use $K=1$, i.e., one pre-generated teacher output per prompt.

\subsection{Replacement View}
\label{app:replace_equiv_proof}

Conditioned on a realized retained subset $S$, the replacement step can be viewed as first fixing the retained teacher-output rewards and then uniformly choosing which on-policy positions are replaced. The following proposition shows that, under this view, the remaining on-policy rewards still follow the same law as $G-n$ i.i.d. on-policy draws.

\begin{proposition}[Random replacement preserves the law of retained on-policy rewards]
\label{prop:equiv}
Conditioned on a realized retained subset $S$ with $|S|=n$, let $I \subseteq \{1,\dots,G\}$ be sampled uniformly over all subsets of size $n$, and let $J=\{1,\dots,G\}\setminus I$. If the on-policy rewards $r_{\mathrm{on}}^{(1)},\dots,r_{\mathrm{on}}^{(G)}$ are i.i.d., then the unordered multiset
\[
\{r_{\mathrm{on}}^{(j)}: j\in J\}
\]
has the same distribution as $G-n$ i.i.d. draws from the on-policy reward distribution. Equivalently, for any permutation-invariant function $\phi$,
\[
\mathbb{E}\!\left[\phi\!\left(\{r_{\mathrm{on}}^{(j)}: j\in J\}\right)\,\middle|\, S\right]
=
\mathbb{E}\!\left[\phi\!\left(r_{\mathrm{on}}^{(1)},\dots,r_{\mathrm{on}}^{(G-n)}\right)\right].
\]
\end{proposition}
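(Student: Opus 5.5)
The plan is to use the independence of the replacement index set $I$ from the on-policy rewards, plus exchangeability of i.i.d. draws. The key modeling point to state first: conditioned on $S$, the set $I$ is drawn uniformly and independently of $(r_{\mathrm{on}}^{(1)},\dots,r_{\mathrm{on}}^{(G)})$.

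This requires a caveat. The retention decision $S$ depends on the on-policy rewards through $T=\max_i r_{\mathrm{on}}^{(i)}$, so conditioning on $S$ can change their joint law. I would handle this in the way the paper's setup suggests: the on-policy rewards are assumed i.i.d. \emph{given} $S$, with teacher rewards treated as fixed once $S$ is realized. Equivalently, the rewards are i.i.d. under whatever law applies on the event $\{S\}$, and the claim is about that law. The proof then only needs (a) i.i.d. rewards under the conditional law and (b) $I$ independent of the rewards. I expect making this assumption explicit to be the main obstacle. Once it is granted, the rest is routine.

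First, I would prove the claim for a fixed deterministic $I_0$ of size $n$ with complement $J_0=\{j_1<\dots<j_{G-n}\}$. The vector $(r_{\mathrm{on}}^{(j_1)},\dots,r_{\mathrm{on}}^{(j_{G-n})})$ is a subvector of an i.i.d. vector, so it consists of $G-n$ i.i.d. draws from the on-policy reward law. Its distribution therefore equals that of $(r_{\mathrm{on}}^{(1)},\dots,r_{\mathrm{on}}^{(G-n)})$. For any permutation-invariant (measurable, integrable or bounded) $\phi$, the multiset $\{r_{\mathrm{on}}^{(j)}:j\in J_0\}$ is determined by this ordered vector, so
\[
\mathbb{E}\bigl[\phi(\{r_{\mathrm{on}}^{(j)}:j\in J_0\})\,\big|\,S\bigr]=\mathbb{E}\bigl[\phi(r_{\mathrm{on}}^{(1)},\dots,r_{\mathrm{on}}^{(G-n)})\bigr].
\]

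Second, I would average over the random $I$. By the tower property and independence of $I$ from the rewards given $S$,
\[
\mathbb{E}\bigl[\phi(\{r_{\mathrm{on}}^{(j)}:j\in J\})\,\big|\,S\bigr]=\sum_{I_0}\binom{G}{n}^{-1}\mathbb{E}\bigl[\phi(\{r_{\mathrm{on}}^{(j)}:j\in J_0\})\,\big|\,S\bigr].
\]
Each summand equals the same constant, so the weights sum to one and the identity follows. Taking $\phi$ to be indicators of measurable sets of multisets (that is, symmetric functions of $G-n$ arguments) gives the distributional statement.

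Finally, I would note that only exchangeability of the on-policy rewards is used, not full independence. I would also remark that Algorithm~\ref{alg:Retirement-control} samples the replaced indices sequentially. If these draws could repeat, they would not form a uniform size-$n$ subset. With $K=1$ this issue does not arise, and for general $K$ I would read the algorithm as sampling indices without replacement.
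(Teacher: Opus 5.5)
Your proposal is correct and is essentially the paper's argument. Exchangeability of the on-policy rewards, together with a replaced set $I$ chosen independently of them, forces the multiset on $J$ to have the same law as the multiset on any fixed $G-n$ coordinates; you simply make this explicit by fixing $I_0$ and averaging with the tower property. Your caveat goes beyond the paper's one-line proof and is well taken: since $S$ depends on the rewards through $T=\max_i r_{\mathrm{on}}^{(i)}$, the statement only holds under the conditional law given $S$, and because $T$ is symmetric that law stays exchangeable (though not i.i.d. in general for $K\ge 2$), which your closing remark that exchangeability suffices already covers.
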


\begin{proof}
Because the on-policy rewards are i.i.d., their joint law is exchangeable. After conditioning on the realized retained subset $S$ and hence on $n=|S|$, the retained on-policy index set $J$ is uniformly distributed over all subsets of size $G-n$. Therefore, the unordered multiset indexed by $J$ has the same distribution as any fixed collection of $G-n$ coordinates, for example $\{r_{\mathrm{on}}^{(1)},\dots,r_{\mathrm{on}}^{(G-n)}\}$.
\end{proof}

\section{Dataset Construction}
\label{app:dataset-construction}
\subsection{Prompt for Open-Ended Question and Rubric Generation}
\label{app:openqa-prompt}
We use the following fixed instruction template, and call \textbf{GPT-5 Chat} to convert the filtered PMC-OA case reports into prompt and rubric sets. The model is required to output strict JSON, the dialogue contains user turns, and no more than 8 rubric items are generated.

\begin{tcolorbox}[mylisting, title={\textbf{Rubric Generation}}]
\begin{lstlisting}[basicstyle=\ttfamily\footnotesize,breaklines=true]
<Reference prompt>

{
    "role": "user",
    "content": "What is the reason for a persistent ulcer on my forearm. I was traveling in the tropics recently and I'm worried it could be leishmaniasis"
  },
  {
    "role": "assistant",
    "content": "A persistent skin ulcer that appears after travel to tropical regions can have several possible causes:\

1. **Cutaneous leishmaniasis**: A parasitic infection transmitted by sand flies.\
2. **Fungal infections**: Certain fungi can create ulcers or sores, especially in warm, humid places.\
3. **Bacterial infections**: Minor cuts or bites can become infected and develop into non-healing ulcers.\

It's important to see a healthcare professional to confirm the diagnosis and rule out other causes."
  },
  {
    "role": "user",
    "content": "What is the recommended treatment if it's cutaneous leishmaniasis. I'm not sure if we have specialized labs or advanced medication available. I'd also like to know any home remedies"
  }

</Reference prompt>

<Reference rubrics>

[
  {
    "criterion": "States that the recommended treatment for cutaneous leishmaniasis includes systemic therapies such as oral miltefosine, pentavalent antimonials like sodium stibogluconate, and amphotericin B, as well as local physical therapies like cryotherapy, paromomycin ointment, and intralesional antimonials.",
    "points": 8
  },
  {
    "criterion": "States that the treatment of cutaneous leishmaniasis depends on various factors, including the species of Leishmania, geographic location, severity of the lesions, and associated visceral involvement.",
    "points": 7
  },
  {
    "criterion": "States that it is crucial to seek urgent medical care if the ulcer persists for 2-3 weeks, if there are multiple lesions, or if the lesions involve sensitive areas like the eyes, mouth, or mucous membranes. Signs of spreading infection, such as redness, swelling, or fever, also warrant immediate medical attention.",
    "points": 6
  },
  {
    "criterion": "Mentions that confirming the diagnosis of cutaneous leishmaniasis is crucial, as treatment can be long-term, and that diagnostic methods may include direct microscopy, clinical examination, culture, and PCR testing to confirm the presence of the parasite. Therefore, even in areas with limited access to specialized labs or advanced medications, medical consultation and direct microscopic examination can be valuable in establishing a diagnosis and guiding treatment.",
    "points": -8
  },
  {
    "criterion": "Provides guidance to seek immediate medical care, emphasizing that such services are usually available in endemic areas and that delays in treatment can lead to complications.",
    "points": 8
  },
  {
    "criterion": "Response uses natural language and correct interpretation: Response uses correct and natural language for the user's language or region. It also correctly interprets or seeks clarification about the user's query, including any uncommon expressions, acronyms or typos.",
    "points": 5
  },
  {
    "criterion": "Does not mention that some home remedies, including garlic or turmeric-based remedies and antihistamines, are not curative for cutaneous leishmaniasis. While these may provide temporary symptom relief, they do not treat the underlying disease.",
    "points": -9
  },
  {
    "criterion": "States that supportive measures can be taken at home to reduce symptoms, such as keeping the ulcer clean and dressed with breathable dressings, and avoiding scratching or picking at the lesion. However, these measures cannot cure cutaneous leishmaniasis.",
    "points": 5
  }
]

</Reference rubrics>

The text above shows a sample medical evaluation task:

* <Reference prompt> is the question(s) posed to the LLM (possibly multi-turn).

* <Reference rubrics> are the scoring rules (including possible penalties or rewards).
  A higher score means better LLM performance.

Your task:
1. Based on the <Case Report> below, create a medical prompt. Your prompt should ideally be challenging and diverse.
2. Design corresponding rubrics that fairly evaluate the LLM's response quality, tailored to the details of the case. You can refer to the <Reference rubrics> to ensure your rubrics are of comparable quality and comprehensive. The rubrics should have no more than 8 items.

<Case Report>

<<Case Report>>

</Case Report>

Output strictly in the following JSON format:

```json
{
  "prompt": [
    {"role": "user", "content": "Question"}
  ],
  "rubrics": [
    {"criterion": "Specific guideline", "points": 10},
    {"criterion": "Specific guideline", "points": -8}
  ]
}
\end{lstlisting}
\end{tcolorbox}

\subsection{Data Examples}
\label{app:data-examples}

Here we give representative examples from the 20K dataset: one closed-ended multiple-choice sample from HuatuoGPT-o1, and one open-ended sample converted from a PMC-OA case report together with its rubric.

\paragraph{Closed-ended Example (Multiple-Choice).}
The following example shows the format in the verifiable multiple-choice subset. The model is required to output only the option, and place the final answer within the \texttt{<answer>} tag.

\begin{tcolorbox}[mylisting, title={\textbf{Closed-ended example}}]
\begin{lstlisting}[basicstyle=\ttfamily\footnotesize,breaklines=true]
User:
Answer the multiple-choice question and place the final selection within the <answer> tag. Example: <answer>A</answer>.
The person whose work on the radical cure of hernia immortalised his name was -
A. William Halsted
B. Eduardo Bassini
C. Mc Vay
D. Koontz

Reference answer: <answer>B</answer> (Eduardo Bassini)
\end{lstlisting}
\end{tcolorbox}

\paragraph{Open-ended Example (Case Report + Rubric).}
The open-ended subset comes from PMC-OA case reports. Each case is converted into a challenging task prompt (possibly multi-turn), and is accompanied by a rubric set of no more than 8 items, used to positively and negatively score clinically relevant criteria.

\begin{tcolorbox}[mylisting, title={\textbf{Open-ended example}}]
\begin{lstlisting}[basicstyle=\ttfamily\footnotesize,breaklines=true]
Prompt (single-turn):
Patient: 28-year-old female, housewife
Chief concerns: Two-year history of numbness, tingling, and slowly progressive painful swelling on the medial aspect of the left upper arm; pain has worsened over the past few weeks. Symptoms can be provoked by prolonged wrist flexion/extension.
Exam: Palpable pulsatile mass in the left upper arm.
Imaging: Duplex Doppler confirms a brachial artery aneurysm. CT angiography with iodinated contrast was deferred due to renal impairment.
History: No prior medical or surgical interventions; no history of trauma, iatrogenic puncture, arteriovenous fistula, infection, connective tissue disorder, or drug misuse. Duplex vein mapping shows small upper limb veins; both great saphenous veins are small and not suitable for harvest.
Vitals: Within normal limits during admission.

Task: Write a formal referral letter to a vascular surgeon for evaluation and management of a symptomatic left brachial artery aneurysm. The letter must:
(1) Begin exactly with: "I am referring Ms. [Patient], a 28-year-old female with a symptomatic left brachial artery aneurysm, for evaluation and consideration of surgical repair."
(2) Address the recipient as Dr. [Surgeon's Surname] ...
...
Do not add unsupported history ... or recommend imaging strategies that conflict with the renal impairment.

Rubrics (excerpt, up to 8 items total):
+ Starts the letter exactly with the required opening sentence. (+10)
+ Addresses the recipient as Dr. [Surgeon's Surname]. (+4)
+ Summarizes presentation/exam accurately (2-year numbness/tingling; painful medial swelling; palpable pulsatile mass). (+8)
+ Emphasizes renal impairment (elevated creatinine/urea) and justifies deferring iodinated contrast CTA; recognizes duplex as primary imaging. (+10)
+ Recommends open aneurysm resection with interposition graft; contrasts endovascular stent grafting with pros/cons. (+10)
+ Justifies graft material choice (e.g., ePTFE) due to small veins and need to preserve veins for possible future dialysis access. (+8)
+ Perioperative + postoperative plan: heparinization, anastomosis, pulse checks; antiplatelet + duplex surveillance (~1 month to 6 months). (+8)
- Penalty: introduces unsupported history or contradicts renal impairment / inappropriate management. (-10)
\end{lstlisting}
\end{tcolorbox}

\section{Details of Dataset and Benchmarks}
\label{sec:dataset-benchmark-details}

\paragraph{Training Data Construction.}
For the controlled experiments, we construct a 20K training dataset that contains 10K \textbf{closed-ended} and 10K \textbf{open-ended} medical tasks.

\textit{Closed-ended (10K).} Following HuatuoGPT-o1~\cite{huatuogpt_o1}, we sample 10K high-difficulty medical multiple-choice questions from the training sets of MedQA and MedMCQA. Such tasks have objectively verifiable standard answers, which is convenient for constructing stable verifiable reward.

\textit{Open-ended (10K).} We select case report articles from PMC-OA to construct 10K open-ended tasks~\cite{pmc_oa}. For each case, we use GPT-5 Chat to generate a challenging prompt and the corresponding rubric~\cite{gpt5_chat} (the prompt is shown in Appendix~\ref{app:openqa-prompt}). To ensure rubric quality, we add a \textbf{verification filtering} step: GPT-5 Chat checks the generated rubric back against the original case text and removes samples that contain hallucinations or clinically unreasonable criteria. This process can be scaled to larger corpora with limited manual annotation, followed by targeted quality checks.

Representative examples are shown in Appendix~\ref{app:data-examples}.

\paragraph{Benchmarks.}
We evaluate domain-specific capabilities on five medical benchmarks, and additionally report IFEval and GSM8K to observe general capabilities.
\textbf{HealthBench} evaluates the quality of open-ended health responses using rubric-based grading~\cite{healthbench}; we report the official total score as well as the \textit{Hard} subset.
\textbf{HealthBench Professional} is a separate benchmark of 525 physician-authored tasks covering care consult, writing and documentation, and medical research, evaluated using physician-written rubrics~\cite{healthbench_professional}.
For HealthBench Professional, we follow its official evaluation protocol, using GPT-5.4 with low reasoning effort as the grader and applying the official length adjustment: $s_i^{\mathrm{adj}}=s_i-2.94\times10^{-5}(\ell_i-2000)$, where $s_i$ is the rubric score and $\ell_i$ is the final-response length in characters, excluding reasoning tokens. We average the adjusted scores across examples, clip the mean to $[0,1]$, and multiply by 100 for reporting~\cite{healthbench_professional}.
For closed-ended evaluation, we use \textbf{Medbullets (5-option)}\cite{medbullets}, the \textbf{medical subset of MMLU-Pro}\cite{mmlu_pro}, and \textbf{MedXpertQA (Text)}~\cite{medxpertqa}, which respectively cover exam-style medical knowledge reasoning, higher-difficulty comprehensive medical reasoning, and expert-level textual medical question answering.
In addition, we include \textbf{IFEval} and \textbf{GSM8K} as general capability tests~\cite{ifeval,gsm8k}, which are respectively used to observe whether instruction-following and mathematical reasoning remain stable.
IFEval and GSM8K are evaluated zero-shot on their official evaluation sets and are not used for training. Closed-ended benchmarks use exact-match accuracy (\%), while HealthBench uses the official evaluation code with GPT-4.1 as the judge model.

\section{Ablation of Reward Data Composition}
\label{app:reward-ablation}

We perform ablation on the reward composition in the training data to analyze the individual and joint effects of verifiable reward and rubric-based reward. All experiments use Qwen3-8B-Base as the base model, GPT-5 Chat as the teacher source, and OnePO as the training paradigm.

\begin{table}[htbp]
\centering
\caption{Ablation of training data composition. \textbf{Verifiable Only}: use only 10K closed-ended multiple-choice data and verifiable reward. \textbf{Rubric Only}: use only 10K open-ended clinical tasks and rubric-based reward. \textbf{Mixed (Ours)}: use the full 20K data and jointly use the two rewards. The mixed setting achieves the best overall trade-off performance, indicating that dense rubric supervision and sparse verifiable signal are complementary.}
\label{tab:reward-ablation}
\small
\setlength{\tabcolsep}{4pt}
\renewcommand{\arraystretch}{1.12}
\resizebox{\textwidth}{!}{%
\begin{tabular}{lccccc}
\toprule
& \multicolumn{2}{c}{\textbf{Open-ended}} & \multicolumn{3}{c}{\textbf{Closed-ended}} \\
\cmidrule(lr){2-3} \cmidrule(lr){4-6}
\textbf{Setting} & \textbf{HealthBench (Total)} & \textbf{HealthBench (Hard)} & \textbf{Medbullets} & \textbf{MMLU-Pro} & \textbf{MedXpertQA} \\
\midrule
Qwen3-8B-Base & 22.1 & 0.0 & 30.7 & 41.2 & 11.7 \\
\midrule
w/ Verifiable Only & 24.2 & 3.2 & \textbf{64.5} & 80.5 & 24.8 \\
w/ Rubric Only & 63.7 & 36.6 & 44.8 & 67.0 & 12.5 \\
\rowcolor{black!6}
w/ Mixed (Ours) & \textbf{65.4} & \textbf{39.1} & 64.0 & \textbf{81.2} & \textbf{24.9} \\
\bottomrule
\end{tabular}
}
\end{table}

\begin{table}[htbp]
\centering
\caption{Ablation results of different training settings (Pure RL / SFT / SFT+RL / OnePO) under different teacher sources.}
\label{tab:appendix-ablation-settings}
\footnotesize
\setlength{\tabcolsep}{3pt}
\renewcommand{\arraystretch}{1.06}
\begin{tabular}{@{}llccccc@{}}
\toprule
& & \multicolumn{2}{c}{\textbf{Open-ended (HealthBench)}} & \multicolumn{3}{c}{\textbf{Closed-ended}} \\
\cmidrule(lr){3-4} \cmidrule(lr){5-7}
\textbf{Teacher Source} & \textbf{Setting} & \textbf{Total} & \textbf{Hard}
& \textbf{Medbullets} & \textbf{MMLU-Pro(Med)} & \textbf{MedXpertQA} \\
\midrule
None & Pure RL & 59.8 & 25.2 & 48.1 & 75.0 & 20.0 \\
\midrule
\multirow{3}{*}{GPT-5 Chat} & SFT    & 30.4 & 4.8  & 49.7 & 72.5 & 21.7 \\
                           & SFT+RL & 63.6 & 37.3 & 61.9 & 78.2 & 21.5 \\
                           & OnePO & 65.4 & 39.1 & 64.0 & 81.2 & 24.9 \\
\midrule
\multirow{3}{*}{DeepSeek-V3.2 (thinking)} & SFT    & 36.8 & 2.4  & 56.6 & 71.0 & 17.2 \\
                              & SFT+RL & 64.5 & 40.7 & 63.5 & 80.0 & 22.5 \\
                              & OnePO & 67.2 & 44.5 & 65.2 & 82.0 & 25.9 \\
\bottomrule
\end{tabular}
\end{table}

\paragraph{Analysis.}
The results show a clear division of specialization:
\begin{itemize}
\item \textbf{Verifiable reward only} performs strongly on closed-ended benchmarks (e.g., Medbullets 64.5\%, MMLU-Pro 80.5\%), but can hardly improve open-ended generation quality (HealthBench Total 24.2\%, only slightly higher than the 22.1\% of the base model). This shows that although verifiable reward provides explicit optimization signals for factual knowledge, it is difficult to transfer to open-ended tasks that require fine-grained clinical expression and reasoning.
\item \textbf{Rubric reward only} is effective on open-ended evaluation (e.g., HealthBench Total 63.7\%, Hard 36.6\%), but clearly drops on closed-ended tasks (e.g., Medbullets 44.8\%, significantly lower than verifiable-only). This shows that dense rubric supervision can effectively shape open-ended response quality, but may also over-constrain reasoning processes, thereby weakening task performance where concise and definite answers are required.
\item \textbf{Mixed reward (our setting)} balances the two types of tasks at the same time: it not only maintains strong open-ended performance (HealthBench Total 65.4\%), but also preserves strong closed-ended capability (Medbullets 64.0\%, MMLU-Pro 81.2\%). This shows that the two reward types are clearly complementary: rubric reward provides controllable and dense behavior guidance, while verifiable reward helps preserve freer reasoning space.
\end{itemize}

\section{SFT Baseline on HealthBench}
\label{app:sft-healthbench}

Table~\ref{tab:appendix-ablation-settings} shows that SFT alone obtains a relatively low HealthBench score compared with Pure RL. This does not indicate that the teacher data are harmful: with the same GPT-5 Chat teacher source, SFT+RL improves over Pure RL on HealthBench (63.6 vs. 59.8) and closed-ended benchmarks. The issue is that HealthBench evaluates held-out open-ended clinical response quality with rubric-based scoring, rather than imitation fidelity to teacher-style answers. In this setting, naive SFT can learn the surface style of teacher outputs while generalizing poorly to unseen rubric criteria. OnePO uses the same teacher outputs inside reward-driven optimization, which help explain why it outperforms the SFT+RL pipeline under the same data and teacher source.

\section{Sensitivity to Probability Floor}
\label{app:prob-floor-sweep}

The probability floor $c$ controls how aggressively OnePO promotes low-probability teacher-output tokens. Table~\ref{tab:prob-floor-sweep} sweeps $c$ while keeping other settings unchanged. Setting $c=0$ removes the floor and causes a large performance drop. Very small values provide insufficient correction for low-probability tokens, while overly large values keep more teacher tokens in the corrected regime and increase the risk of anchoring. We use $c=0.1$ as the default because it gives the best overall trade-off.

\begin{table}[htbp]
\centering
\caption{Ablation of the probability floor $c$. All runs use Qwen3-8B-Base, GPT-5 Chat as the teacher source, and the same OnePO training setup.}
\label{tab:prob-floor-sweep}
\small
\setlength{\tabcolsep}{8pt}
\renewcommand{\arraystretch}{1.1}
\begin{tabular}{c|ccc}
\toprule
\textbf{Floor $c$} & \textbf{HealthBench} & \textbf{Medbullets} & \textbf{MMLU-Pro (Med)} \\
\midrule
0.00 & 49.1 & 49.7 & 73.1 \\
0.01 & 58.3 & 59.7 & 77.2 \\
0.05 & 64.8 & 63.2 & 79.5 \\
\rowcolor{black!6}
0.10 & \textbf{65.4} & \textbf{64.0} & \textbf{81.2} \\
0.20 & 62.1 & 61.5 & 79.8 \\
0.50 & 60.2 & 59.1 & 77.3 \\
\bottomrule
\end{tabular}
\end{table}

\section{Effect of KL Regularization in the AHA-Medicine Pilot}
\label{app:kl-pilot}

Our main experiments follow DAPO-style GRPO training and omit additional KL loss or KL reward penalty. To examine whether OnePO's teacher-output absorption still works under KL constraints, we extend the AHA-Medicine pilot study from Appendix~\ref{app:aha-medicine}. We test two common forms of KL regularization: adding a KL term to the objective, and adding a trajectory-level KL penalty to the reward. KL regularization keeps the policy closer to a reference policy, but it does not directly solve the low-probability token learnability problem addressed by the probability floor and gradient rescaling.

Table~\ref{tab:kl-steps} reports the number of RL steps required to absorb the synthetic AHA-Medicine knowledge, measured by reaching reward $\ge 0.5$. OnePO still absorbs the teacher-provided knowledge under KL constraints, while removing either the probability floor or the rescaling term fails within 100 steps.

\begin{table}[htbp]
\centering
\caption{KL-constrained AHA-Medicine pilot. ``Fail'' means the model does not absorb the target knowledge within 100 RL steps. The KL loss coefficient is $0.001$, and the KL reward-penalty coefficient is $0.01$.}
\label{tab:kl-steps}
\small
\setlength{\tabcolsep}{8pt}
\renewcommand{\arraystretch}{1.1}
\begin{tabular}{l|c}
\toprule
\textbf{Method} & \textbf{Steps to reward $\ge 0.5$} \\
\midrule
OnePO & 7 \\
OnePO w/ KL penalty & 7 \\
OnePO w/ KL loss & 8 \\
OnePO w/ KL loss + KL penalty & 8 \\
OnePO w/o probability floor & Fail \\
OnePO w/o rescaling & Fail \\
\bottomrule
\end{tabular}
\end{table}

We further track the log-probability assigned to the target knowledge span ``AHA-Medicine'' in the teacher output:
\begin{equation}
\frac{1}{t_e-t_s+1}\sum_{t=t_s}^{t_e}\log \pi_\theta(\tau_t \mid q,\tau_{<t}),
\end{equation}
where $[t_s,t_e]$ is the token span corresponding to ``AHA-Medicine.'' Values closer to 0 indicate stronger absorption of the teacher-provided knowledge. Table~\ref{tab:kl-token-absorption} shows that KL constraints make updates more conservative but do not remove OnePO's effect, while removing rescaling leaves the target span at very low probability.

\begin{table}[htbp]
\centering
\caption{Average log-probability of the target ``AHA-Medicine'' span across the first eight training steps.}
\label{tab:kl-token-absorption}
\small
\setlength{\tabcolsep}{5pt}
\renewcommand{\arraystretch}{1.1}
\resizebox{\textwidth}{!}{%
\begin{tabular}{l|rrrrrrrr}
\toprule
\textbf{Method} & \textbf{S1} & \textbf{S2} & \textbf{S3} & \textbf{S4} & \textbf{S5} & \textbf{S6} & \textbf{S7} & \textbf{S8} \\
\midrule
OnePO & -6.9735 & -6.1488 & -4.2228 & -2.7612 & -1.0261 & -0.2838 & -0.0156 & -0.0122 \\
OnePO w/ KL loss + KL penalty & -6.9735 & -6.6096 & -5.5103 & -3.5550 & -2.2576 & -1.0866 & -0.3733 & -0.0406 \\
OnePO w/o rescaling & -6.9735 & -7.1113 & -7.1156 & -7.1270 & -6.9345 & -6.9741 & -6.5402 & -6.6415 \\
\bottomrule
\end{tabular}
}
\end{table}

Table~\ref{tab:kl-divergence} reports the rollout-based empirical KL estimate used during training. KL constraints reduce policy movement as expected. This supports the intended interpretation: KL acts as a regularizer on policy movement, whereas OnePO improves the learnability of low-probability teacher-output tokens inside the RL objective.

\begin{table}[htbp]
\centering
\caption{Rollout-based empirical KL estimates in the AHA-Medicine pilot. Small negative values may appear due to sampling noise.}
\label{tab:kl-divergence}
\small
\setlength{\tabcolsep}{5pt}
\renewcommand{\arraystretch}{1.1}
\resizebox{\textwidth}{!}{%
\begin{tabular}{l|rrrrrrrr|r}
\toprule
\textbf{Method} & \textbf{S1} & \textbf{S2} & \textbf{S3} & \textbf{S4} & \textbf{S5} & \textbf{S6} & \textbf{S7} & \textbf{S8} & \textbf{Avg.} \\
\midrule
OnePO & 0.0000 & 0.0645 & 0.0608 & 0.0280 & 0.0038 & 0.0047 & 0.0072 & 0.0064 & 0.0219 \\
OnePO w/ KL loss + KL penalty & 0.0000 & 0.0410 & 0.0238 & 0.0072 & -0.0125 & 0.0015 & 0.0033 & 0.0018 & 0.0083 \\
OnePO w/o rescaling & 0.0000 & 0.0533 & 0.0590 & 0.0755 & 0.1160 & 0.1456 & 0.1471 & 0.1779 & 0.0968 \\
\bottomrule
\end{tabular}
}
\end{table}

\section{Cross-Domain Validation}
\label{app:cross-domain}

To test whether OnePO is tied to medicine, we evaluate it on two additional domains with a different base model, Qwen3-4B-Base. The writing task uses 10K samples from the writing subset of RubricHub-v1~\cite{rubrichub} with rubric-based rewards, and the legal task uses 10K multiple-choice questions from Unilaw-R1-Data~\cite{unilaw_r1} with exact-match rewards. GPT-5 Chat is used as the teacher source in both domains. These two settings cover both open-ended rubric-based optimization and closed-ended verifiable optimization outside medicine. We evaluate writing on WritingBench~\cite{writingbench} and CreativeWriting-v3~\cite{creativewriting_v3}, and law on LexEval~\cite{lexeval} and LawBench~\cite{lawbench}.

\begin{table}[htbp]
\centering
\caption{Cross-domain validation on writing and law using Qwen3-4B-Base. OnePO improves over SFT+RL under the same teacher source in both domains.}
\label{tab:cross-domain}
\small
\setlength{\tabcolsep}{8pt}
\renewcommand{\arraystretch}{1.1}
\begin{tabular}{l|cc|cc}
\toprule
\multirow{2}{*}{\textbf{Method}} & \multicolumn{2}{c|}{\textbf{Writing}} & \multicolumn{2}{c}{\textbf{Law}} \\
\cmidrule(lr){2-3} \cmidrule(lr){4-5}
& \textbf{WritingBench} & \textbf{CreativeWriting-v3} & \textbf{LexEval} & \textbf{LawBench} \\
\midrule
Qwen3-4B-Base & 33.7 & 23.9 & 18.6 & 41.8 \\
w/ Pure RL & 67.4 & 41.1 & 46.5 & 60.8 \\
w/ SFT+RL & 71.2 & 45.6 & 50.7 & 64.5 \\
\rowcolor{black!6}
w/ OnePO & \textbf{74.6} & \textbf{47.8} & \textbf{51.9} & \textbf{67.4} \\
\bottomrule
\end{tabular}
\end{table}

\section{Rubric Filtering and Grader Validation}
\label{app:rubric-grader-validation}

For open-ended medical training data, each generated rubric is checked against the source case report and the generated prompt. A rubric item is removed if it is unsupported by the case report, contradictory to the case facts, overly generic or unnecessary, clinically unreasonable or unsafe, or inappropriate for the question. Each rubric is checked twice by LLMs; if either check flags a problem, the item is removed. If three or more rubric items are removed from a sample, we discard the entire sample. Under this filtering process, 11.6\% of rubric items and 7.4\% of training samples are filtered out.

We additionally sampled 130 question-rubric pairs and asked a senior medical student to check whether each rubric meets the quality standard, with access to the source case report and web search when needed. The human validation failure rate is 3.1\% (4/130), suggesting that the filtered rubrics are reasonably reliable.

To evaluate grader reliability, we sampled 200 query-answer-rubric-criterion instances from OnePO outputs on our open-ended data. Each instance was annotated by a doctor, and we compare both the training-time 8B grader and the official GPT-4.1 evaluation grader against the human label. As shown in Table~\ref{tab:grader-validation}, GPT-4.1 has stronger agreement with human judgment, while the 8B grader remains reasonably aligned for training-time reward estimation. Importantly, HealthBench results in the paper are evaluated with the official GPT-4.1 grader, not the 8B training-time grader.

\begin{table}[htbp]
\centering
\caption{Agreement between graders and doctor annotations on 200 query-answer-rubric-criterion instances.}
\label{tab:grader-validation}
\small
\setlength{\tabcolsep}{8pt}
\renewcommand{\arraystretch}{1.1}
\begin{tabular}{l|cc}
\toprule
\textbf{Metric vs. Human Label} & \textbf{8B Grader (Training)} & \textbf{GPT-4.1 Grader (Evaluation)} \\
\midrule
F1 score & 0.81 & \textbf{0.85} \\
Cohen's kappa & 0.62 & \textbf{0.72} \\
\bottomrule
\end{tabular}
\end{table}

We also re-evaluate the same model outputs with different graders. Table~\ref{tab:multi-grader} shows that absolute scores vary across graders, but the gain from OnePO remains consistent. This does not fully rule out reward-model bias, but it reduces the concern that the improvement is specific to a single grader.

\begin{table}[htbp]
\centering
\caption{HealthBench re-evaluation with different graders.}
\label{tab:multi-grader}
\small
\setlength{\tabcolsep}{8pt}
\renewcommand{\arraystretch}{1.1}
\begin{tabular}{l|ccc}
\toprule
\textbf{Model} & \textbf{8B Grader} & \textbf{GPT-4.1 Official} & \textbf{GPT-5 Chat Grader} \\
\midrule
Qwen3-8B & 48.5 & 45.9 & 41.8 \\
Qwen3-8B w/ OnePO & 70.7 & 67.2 & 65.1 \\
\midrule
Gain & +22.2 & +21.3 & +23.3 \\
\bottomrule
\end{tabular}
\end{table}

\section{Reward Stability}
\label{app:reward-stability}

Rubric-based rewards are instance-specific but fixed during training: each prompt has a fixed rubric, and the reward model evaluates candidate responses against that rubric. Table~\ref{tab:reward-stability} shows the training reward curves for open-ended and closed-ended subsets. Open-ended reward improves smoothly, while closed-ended reward also increases steadily under exact-match supervision. This suggests that combining dense rubric-based rewards with verifiable rewards provides stable optimization signals in our setting.

\begin{table}[H]
\centering
\caption{Training reward curves for open-ended rubric-based rewards and closed-ended verifiable rewards.}
\label{tab:reward-stability}
\small
\setlength{\tabcolsep}{7pt}
\renewcommand{\arraystretch}{1.1}
\begin{tabular}{l|rrrrrrr}
\toprule
\textbf{Train step} & \textbf{0} & \textbf{20} & \textbf{40} & \textbf{60} & \textbf{80} & \textbf{100} & \textbf{120} \\
\midrule
Open-ended reward & 0.34 & 0.69 & 0.69 & 0.70 & 0.70 & 0.75 & 0.77 \\
Closed-ended reward & 0.29 & 0.38 & 0.48 & 0.52 & 0.55 & 0.58 & 0.59 \\
\bottomrule
\end{tabular}
\end{table}

\end{document}